\documentclass[conference]{IEEEtran}
\usepackage{times}

% numbers option provides compact numerical references in the text. 
\usepackage[numbers]{natbib}
\usepackage{multicol}
\usepackage[bookmarks=true]{hyperref}

\usepackage{algorithm2e}
\usepackage{mathtools}
\usepackage{amsthm} 
\usepackage[utf8]{inputenc}
\usepackage[english]{babel}
\usepackage{graphicx}
\usepackage{subcaption}
\usepackage{color, colortbl}
\usepackage[dvipsnames]{xcolor}

\DeclarePairedDelimiter{\norm}{\lVert}{\rVert}

\theoremstyle{definition}
\newtheorem{definition}{Definition}[]
\newtheorem{theorem}{Theorem}
\definecolor{Gray}{gray}{0.9}
\definecolor{Red}{RGB}{227,74,51}
\definecolor{Green}{RGB}{127,205,187}
\definecolor{Blue}{RGB}{44,127,184}
\definecolor{GrayPlot}{RGB}{240,240,240}

\pdfinfo{
   /Author (Benoit Landry)
   /Title  (A Differentiable Augmented Lagrangian Method for Bilevel Nonlinear Optimization)
   /CreationDate (D:20101201120000)
   /Subject (Differentiable Optimization, Robotics)
   /Keywords (Robots;Bilevel;Optimization)
}

\begin{document}

% paper title
\title{A Differentiable Augmented Lagrangian Method for Bilevel Nonlinear Optimization}

% You will get a Paper-ID when submitting a pdf file to the conference system
% \author{Author Names Omitted for Anonymous Review. Paper-ID 63}

\author{\IEEEauthorblockN{Benoit Landry, Zachary Manchester and Marco Pavone}
\IEEEauthorblockA{Department of Aeronautics and Astronautics\\
Stanford University\\
Stanford, CA, 94305\\
Email: \{blandry,zacmanchester,pavone\}@stanford.edu}}

% avoiding spaces at the end of the author lines is not a problem with
% conference papers because we don't use \thanks or \IEEEmembership

\maketitle

\begin{abstract}
Many problems in modern robotics can be addressed by modeling them as bilevel optimization problems. In this work, we leverage augmented Lagrangian methods and recent advances in automatic differentiation to develop a general-purpose nonlinear optimization solver that is well suited to bilevel optimization. We then demonstrate the validity and scalability of our algorithm with two representative robotic problems, namely robust control and parameter estimation for a system involving contact. We stress the general nature of the algorithm and its potential relevance to many other problems in robotics.
\end{abstract}

\IEEEpeerreviewmaketitle

\section{Introduction}
Bilevel optimization is a general class of optimization problems where a lower level optimization is embedded within an upper level optimization. These problems provide a useful framework for solving problems in modern robotics that have a naturally nested structure such as some problems in motion planning \cite{CariusRanftlEtAl2018,ToussaintAllenEtAl2018}, estimation \cite{AvilaBelbute-PeresSmithEtAl2018} and learning \cite{AmosKolter2017,FinnAbbeelEtAl2017}. However, despite their expressiveness, bilevel optimization problems remain difficult to solve in practice, which currently limits their impact on robotics applications. 

To overcome these difficulties, this work leverages recent advances in the field of automatic differentiation to develop a nonlinear optimization algorithm based on augmented Lagrangian methods that is well suited to automatic differentiation. The ability to differentiate our solver allows us to combine it with a second, state-of-the-art nonlinear program solver, such as SNOPT \cite{GillMurrayEtAl2005} or Ipopt \cite{WachterBiegler2006} to provide a solution method to bilevel optimization problems.

\begin{figure}
    \centering
    \includegraphics[trim={9cm 8cm 12cm 8cm},clip,width=\columnwidth]{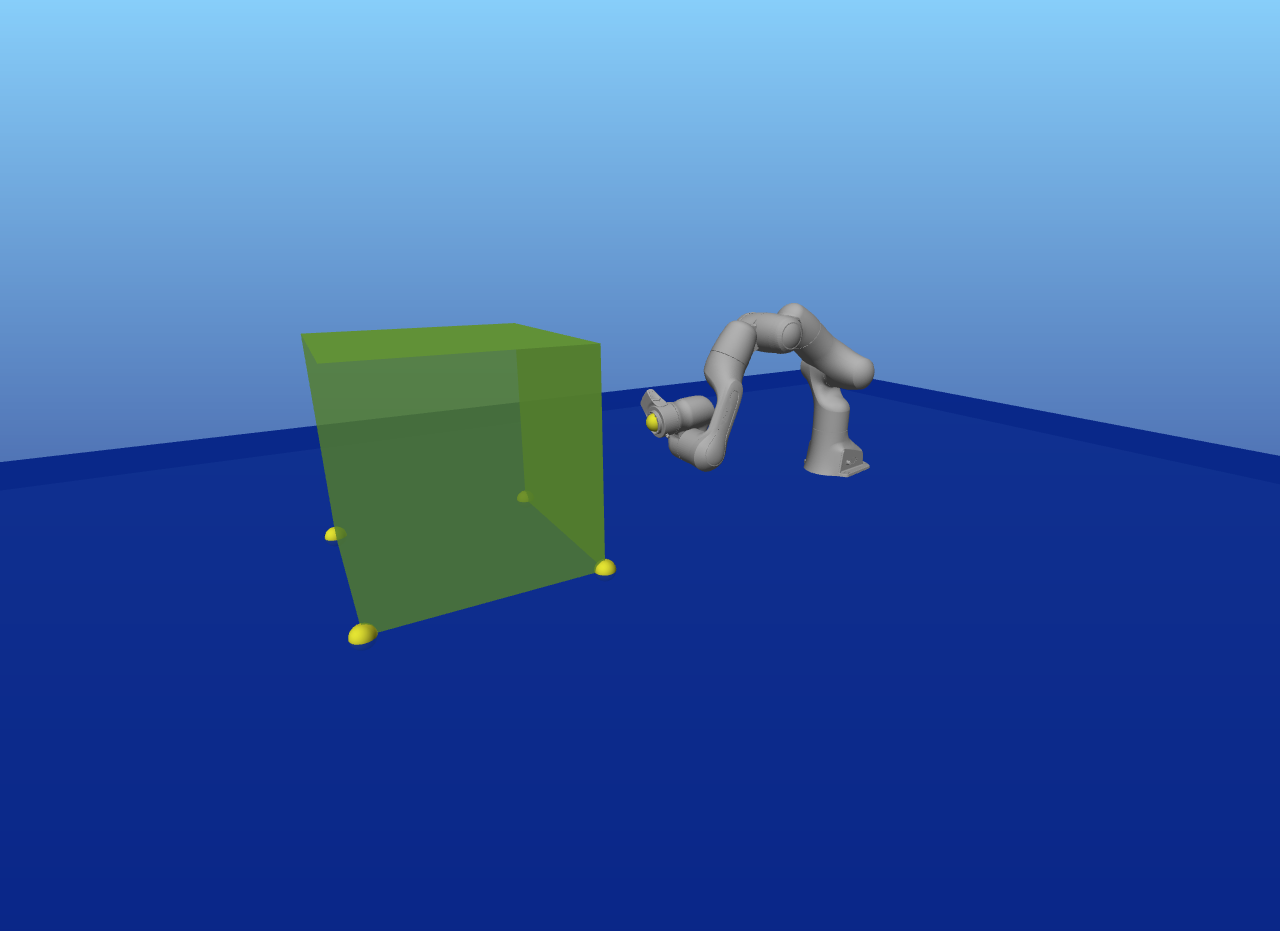}
    \caption{For our parameter estimation example, a robotic arm pushes a box (simulated with five contact points - the yellow spheres - and Coulomb friction) and then estimates the coefficient of friction between the box and the ground. This is done by solving a bilevel program using a combination of SNOPT and our differentiable solver. Unlike alternative formulations, the bilevel formulation of this estimation problem scales well with the number of samples used because it can easily leverage parallelization of the lower problems.}
    \label{fig:push}
\end{figure}

Similar solution methods can be found in various forms across many disciplines. However, our approach differentiates itself from previous methods in a few respects. First, we keep the differentiable solver general in that it is a nonlinear optimization algorithm capable of handling both equality and inequality constraints without relying on projection steps. Our method also does not depend on using the Karush–Kuhn–Tucker (KKT) conditions in order to retrieve the relevant gradients, as described in various work going as far back as \cite{LevyRockafellar1995}, and more recently in the context of machine learning \cite{AmosKolter2017}. Instead, closer to what was done for a specialized physics engine in \cite{DegraveHermansEtAl2017}, we carefully implement our solver without any branching or non-differentiable operations and auto-differentiate through it. We therefore pay a higher price in computational complexity but avoid some of the limitations of relying on the KKT conditions (like having to solve the lower problem to optimality for example). We also leverage recent advances in automatic differentiation by implementing our solver in the Julia programming language, which is particularly well suited for this task. When applying the method to robotics, unlike a lot of previous work that combines differentiable solvers with stochastic gradient descent in a machine learning context, we demonstrate how to incorporate our solver with another state-of-the-art solver that uses sequential quadratic programming (SQP) to address problems relevant to robotics. Among other things, this allows us to use our solver to handle constraints in bilevel optimization problems, not just unconstrained objective functions \cite{AvilaBelbute-PeresSmithEtAl2018,DegraveHermansEtAl2017}.

{\em Statement of Contributions:} Our contributions are two fold. First, by combining techniques from the augmented Lagrangian literature and leveraging recent advances in automatic differentiation, we develop a nonlinear optimization algorithm that is entirely differentiable. Second, we demonstrate the validity and scalability of the approach on two important problems in modern robotics: robust control and parameter estimation for systems with complex dynamics. All of our code is available at \url{https://github.com/blandry/Bilevel.jl}.

\section{Related Work}

\subsection{Bilevel Optimization}
Bilevel optimization problems are mathematical programs that contain the solution of other mathematical programs in their constraints or objectives. In this work we refer to the primary optimization problem as the \textit{upper} problem, and the embedded optimization problem as the \textit{lower} problem. A more detailed definition of the canonical bilevel optimization problem can be found in Section \ref{sec:bilevel}. 

There have been many proposed solutions to general bilevel optimization problems. When the lower problem is convex and regular (with finite derivative), one approach is to directly include the KKT conditions of the lower problem in the constraints of the upper problem. However except for special cases, the complementarity aspect of the KKT conditions can yield combinatorial optimization problems that require enumeration-based solution methods such as branch-and-bound, which can be nontrivial to solve \cite{BardFalk1982,Fortuny-AmatMcCarl1981,BialasKarwan1984,ChenFlorian1992,TuyMigdalasEtAl1993}.
Other proposed solutions to bilevel optimization problems use \emph{descent methods}. These methods express the lower problem solution as a function of the upper problem variables and try to find a descent direction that keeps the bilevel program feasible. These descent directions can be estimated in various ways, such as by solving an additional optimization problem \cite{KolstadLasdon1990,SavardGauvin1994}. Descent methods contain some important similarities to our proposed method, but are usually more limiting in the types of constraints they can handle.
Finally, in recent years many derivative free methods for bilevel optimization (sometimes called \emph{evolutionary} methods) have gained popularity. We refer readers to \cite{ColsonMarcotteEtAl2007,SinhaMaloEtAl2018} for a more detailed overview of solution methods to bilevel optimization problems.

\subsection{Differentiating Through Mathematical Programs}
Recently there has been a resurgence of interest in the potential for differentiating through optimization problems due to the success of machine learning. One of the more successful approaches to date solves the bilevel program's lower problem using an interior point method and computes the gradient \emph{implicitly}  \cite{AmosKolter2017}. The computation of the gradient leverages results in sensitivity analysis of nonlinear programming \cite{LevyRockafellar1995,FiaccoIshizuka1990,RalphDempe1995}.

Closest to our proposed method, \emph{unrolling} methods attempt to differentiate through various solution algorithms directly \cite{Domke2012,AmosXuEtAl2017,BelangerYangEtAl2017}. To the best of our knowledge, unrolling approaches are either limited to unconstrained lower optimization problems (which include meta-learning approaches like \cite{FinnAbbeelEtAl2017}), or the lower problem solver is a special purpose solver that handles its constraints using projection steps, which is similar to what is done in \cite{CariusRanftlEtAl2018} or \cite{DegraveHermansEtAl2017}.

\subsection{Robotics Applications of Bilevel Optimization}
Several groups have explored problems in robotics using bilevel programming solution methods. For example \cite{CariusRanftlEtAl2018} explores a trajectory optimization problem for a hopping robot. In their work the lower problem consists of rigid body dynamics with contact constraints that uses a special purpose solver (that relies on a projection step) and the upper problem is the trajectory optimization which is solved by dynamic programming. This work is related to our example described in Section \ref{subsec:robust_result}, where the upper problem in the bilevel program is also a trajectory optimization problem. Trajectory generation is also addressed in \cite{ToussaintAllenEtAl2018}, which explores the use of bilevel optimization in the context of integrated task and motion planning (with a mixed-integer upper problem).

In the machine learning community, there have been attempts at including a differentiable solver inside the learning process (e.g. in an unconstrained minimization method such as stochastic gradient descent). Similar to our parameter estimation example described in Section \ref{subsec:param}, the work in \cite{AvilaBelbute-PeresSmithEtAl2018} leverages the approach in \cite{AmosKolter2017} for a learning task that includes a linear complementarity optimization problem (arising from dynamics with hard contact constraints) as a lower problem. This work then uses gradient descent to solve upper problems related to parameter estimation. Our approach to the parameter estimation problem differentiates itself from \cite{AvilaBelbute-PeresSmithEtAl2018} by using a different solution method for both the lower and upper problems. Specifically, we solve the lower problem using our proposed augmented Lagrangian method and solve the upper one using a sequential quadratic program (SQP) solver. 
This approach allows us to handle constraints on the estimated parameters, which is not possible using gradient descent like in \cite{AvilaBelbute-PeresSmithEtAl2018}. Our work also differs in the fact that we use full nonlinear dynamics (instead of linearized dynamics) and demonstrate our approach on a 3D example (instead of 2D ones). The authors of \cite{DegraveHermansEtAl2017} also develop a differentiable rigid body simulator capable of handling contact dynamics by developing a special purpose solver and using it in conjunction with gradient descent for various applications.

\section{Bilevel Optimization}\label{sec:bilevel}
The ability to differentiate through a nonlinear optimization solver allows us to address problems in the field of bilevel programming. Here, we carefully define the notation related to bilevel optimization.

In bilevel programs, an \emph{upper} optimization problem is formulated with the result of a \emph{lower} optimization problem appearing as part of the upper problem's objective, constraints, or both.
In this work we formulate the bilevel optimization problem using the following definitions, from \cite{SinhaMaloEtAl2018}:
% Reusing some of the definitions in \cite{sinha2018review}, we can formulate bilevel optimization problems using Definitions \ref{def:bilevel} and \ref{def:lower}.

\theoremstyle{definition}
\begin{definition}{Bilevel Optimization}\label{def:bilevel}\\
A bilevel program is a mathematical program (or optimization problem) that can be written in the form:
\begin{equation}
\begin{aligned}
& \underset{x_u \in X_U, x_l \in X_L}{\text{minimize}}
& & F(x_u,x_l) \\
& \text{subject to}
& & x_l \in \underset{x_l \in X_L}{\text{argmin}}\{f(x_u,x_l): \\
&
& & \hspace{20mm} g_i(x_u,x_l) \leq 0, i = 1, \ldots, m \\
& 
& & \hspace{20mm} h_j(x_u,x_l) = 0, j = 1, \ldots, n \}, \\
& 
& & G_k(x_u,x_l) \leq 0, k = 1, \ldots, M, \\
&
& & H_m(x_u,x_l) = 0, m = 1, \ldots, N,
\end{aligned}\label{eq:bilevel_prob}
\end{equation}
where $x_u$ represents the decision variables of the upper problem and $x_l$ represents the decision variables of the lower problem. The nonlinear expressions $f$, $g$ and $h$ represent the objective and constraints of the lower problem, and the nonlinear expressions $F$, $G$ and $H$ represent the objective and constraints of the upper problem. Notably, the $\text{argmin}$ operation in problem \ref{eq:bilevel_prob} returns a \emph{set} of optima for the lower problem.
\end{definition}

\theoremstyle{definition}
\begin{definition}{Lower Problem Solution}\label{def:lower}\\
We define the set-valued function $\Psi(x_u)$ as the solution to the following optimization problem:
\begin{equation}
\begin{aligned}
& \Psi(x_u) = 
& & \underset{x_l \in X_L}{\text{argmin}}\{ f(x_u,x_l): \\
& 
& & \hspace{12mm} g_i(x_u,x_l) \leq 0, i = 1, \ldots, I,\\
& 
& & \hspace{12mm} h_j(x_u,x_l) = 0, j = 1, \ldots, J \},
\end{aligned}
\end{equation}
where the definition of $x_u$, $x_l$, $f$, $g$ and $h$ are the same as in Definition \ref{def:bilevel}.
\end{definition}

\section{A Differentiable Augmented Lagrangian Method}

In order to solve bilevel optimization problems, we propose to implement a simple solver based on augmented Lagrangian methods. The solver we present carefully combines techniques from the nonlinear optimization community in order to achieve a set of properties that are key in making our solver useful in the context of bilevel optimization. In our proposed solution to bilevel optimization, an upper solver (like an off-the-shelf SQP solver) makes repeated calls to a lower solver in order to obtain the optimal value of the lower problem and the corresponding gradients at that solution. These gradients are the partials of the optimal solution with respect to each parameter of the lower problem, which themselves correspond to the variables of the upper problem. To achieve this, the key properties that the lower solver must have include differentiability, robustness and efficiency.

\subsection{Solver Overview}\label{subsec:algooverview}
We now give a quick overview of our solver. First, the solver converts inequality constraints to equality constraints by introducing extra decision variables for reasons discussed in Section \ref{subsec:diff}. It then uses the robust first-order augmented Lagrangian method to warm-start a full primal-dual Newton method for fast tail convergence. Those initial first-order updates notably discover the constraints active set and actually use a second-order update on the primal (but not dual) variables. The differentiability of these update steps is preserved through design choices again explained in Section \ref{subsec:diff}. The subsequent primal-dual Newton method on the augmented Lagrangian is closely related to SQP. The combination of a first-order method with a second-order one is justified in section \ref{subsec:robust}. Algorithm \ref{algo:auglag} contains an overview of the solver just described, with $\norm{\cdot}_F$ representing the Frobenius norm.

\begin{algorithm}
\SetAlgoLined
\KwResult{$x$, $\lambda$ s.t. $x \approx \text{argmin}f(x); h_0(x) \approx 0; g_0(x) \leq \epsilon$} 
 $h \leftarrow \{h_0 \cup \text{convert}(g_0)\}$ \\
 $x,\lambda \leftarrow 0$ \\
 \While{$i \leq \text{num\ first\ order\ iterations}$}{
    $g \leftarrow \nabla{f}(x) - \nabla{h}(x)^T\lambda + c\nabla{h}(x)^Th(x)$ \\
    \vspace{1mm}
    $\mathbf{H} \leftarrow \nabla_{2}{f}(x) + c\nabla{h(x)}^T\nabla{h(x)}$  \\
    \vspace{1mm}
    $x \leftarrow x - (\mathbf{H} + (\norm{\mathbf{H}}_F + \epsilon) \cdot \mathbf{I})^{-1} g$ \\
    \vspace{1mm}
    $\lambda \leftarrow \lambda - c \cdot h(x)$\\
    $c \leftarrow \alpha \cdot c$\\
    $i \leftarrow i + 1$
    }
 \While{$j \leq \text{num\ second\ order\ iterations}$}{
    $g \leftarrow \nabla{f}(x) - \nabla{h}(x)^T\lambda + c\nabla{h}(x)^Th(x)$ \\
    $\mathbf{H} \leftarrow \nabla_{2}{f}(x) + c\nabla{h(x)}^T\nabla{h(x)}$ \\
    \vspace{2mm}
    $\mathbf{K} \leftarrow 
    \begin{bmatrix}
        \mathbf{H} & \nabla{h}^T \\
        \nabla{h} & 0
    \end{bmatrix} $\\
    \vspace{2mm}
    $\mathbf{U},\mathbf{S},\mathbf{V} \leftarrow \text{svd}(\mathbf{K})$ \\
    \vspace{2mm}
    $\mathbf{S}_i \leftarrow \Gamma(\mathbf{S}) \mathbf{S}^{-1} $\\
    \vspace{2mm}
    $\begin{bmatrix}x \\ \lambda \end{bmatrix} \leftarrow \begin{bmatrix}x \\ \lambda \end{bmatrix} - \mathbf{V} \mathbf{S}_i \mathbf{U}^T \begin{bmatrix} g \\ h(x) \end{bmatrix} $\\
    \vspace{2mm}
    $j \leftarrow j + 1$
 }
 \caption{Differentiable Augmented Lagrangian Method}
 \label{algo:auglag}
\end{algorithm}

\subsection{Differentiability}\label{subsec:diff}
In order to keep the solver differentiable, it is important to avoid any branching in the algorithm. In the context of nonlinear optimization this can turn out to be difficult. Indeed, most nonlinear optimization methods include a line-search component which would be difficult to implement without branching. In order to get around this problem, we strictly use second-order updates on the primal as described above. This is more computationally demanding than first-order updates, but it gives a useful step-size estimate without requiring a line search. Another complication brought on by using second-order updates is that the Hessian of the augmented Lagrangian function used to compute the second-order updates needs to be sufficiently positive definite in order to be useful. This problem is usually handled by either adding a multiple of identity to the Hessian such that all its eigenvalues are above some numerical tolerance $\delta$, or by projecting the matrix to the space of positive definite matrices using a singular value decomposition \cite{NocedalWright2006}. When using the first method, the diagonal correction matrix $\Delta H$ with the smallest euclidean norm that satisfies $\lambda_{\min}(H + \Delta H) \geq \delta$ for some numerical tolerance $\delta$ is given by
\begin{equation}
\Delta H^* = \tau I,\text{ with }\tau = \max(0, \delta - \lambda_{\min}(H)).
\end{equation}

Since our first few optimization steps on the primal variables are only used to compute a good initial guess for the dual variables, the matrix $\Delta H$ does not need to be exactly optimal (in terms of keeping the steepest descent direction intact). We therefore choose to instead solve for $\Delta H$ in a fast and easily differentiable way by using $\Delta H = (\norm{H}_F + \delta)I$. We prove that this modification makes the Hessian sufficiently positive definite in Theorem \ref{theo:matrixmod}.

\theoremstyle{theorem}
\begin{theorem}[Conservative Estimate of Required Matrix Modification]
\label{theo:matrixmod}
% 1) first we know that adding a multiple of the identity to a matrix increases its eigenvalues by that ammount
% 2) now we know that for a square symetric matrix, the largest singular value equals the largest eigen value in the absolute value
% 3) we know that the frobenius norm is an upper bound to the largest singular value
% 4) so adding F+d makes every eigenvalues of A at least d
Let $A$ be a square, symmetric matrix and $\delta$ a non-negative scalar, then $\Delta A := (\norm{A}_F + \delta)I$ is a matrix such that $\lambda_{\min}(A + \Delta A) \geq \delta$, where $\lambda_{\min}$ is the smallest eigenvalue of $A$.
\end{theorem}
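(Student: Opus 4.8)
The plan is to reduce everything to the spectral decomposition of $A$ and then observe that the Frobenius norm dominates the modulus of every eigenvalue. Since $A$ is real and symmetric, the spectral theorem gives an orthogonal matrix $Q$ and real eigenvalues $\lambda_1 \ge \lambda_2 \ge \dots \ge \lambda_n$ with $A = Q\,\mathrm{diag}(\lambda_1,\dots,\lambda_n)\,Q^\top$, so $\lambda_{\min}(A) = \lambda_n$. First I would note that $A + \Delta A = Q\,\mathrm{diag}\bigl(\lambda_1 + \norm{A}_F + \delta,\dots,\lambda_n + \norm{A}_F + \delta\bigr)\,Q^\top$, because $\Delta A$ is a scalar multiple of the identity and hence commutes with the diagonalization; therefore $\lambda_{\min}(A + \Delta A) = \lambda_n + \norm{A}_F + \delta$.

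It then remains to show $\lambda_n + \norm{A}_F \ge 0$, i.e. $\norm{A}_F \ge -\lambda_n$. The key step is the identity $\norm{A}_F^2 = \sum_{i=1}^n \lambda_i^2$, which follows from the orthogonal invariance of the Frobenius norm (it equals $\sqrt{\mathrm{tr}(A^\top A)}$ and $\mathrm{tr}$ is similarity-invariant). From this, $\norm{A}_F^2 \ge \lambda_n^2$, hence $\norm{A}_F \ge |\lambda_n| \ge -\lambda_n$. Combining, $\lambda_{\min}(A+\Delta A) = \lambda_n + \norm{A}_F + \delta \ge 0 + \delta = \delta$, which is the claim.

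There is essentially no hard part here: the only thing one must be slightly careful about is invoking symmetry twice — once to get real eigenvalues and a clean shift of the spectrum under $A \mapsto A + cI$, and once to identify $\norm{A}_F$ with the Euclidean norm of the eigenvalue vector. Without symmetry the Frobenius norm would instead control the singular values, not the eigenvalues, and the conclusion about $\lambda_{\min}$ could fail, so I would make sure the write-up flags exactly where symmetry is used. The argument also transparently shows the estimate is conservative (the bound uses only $\lambda_n^2 \le \sum_i \lambda_i^2$), consistent with the surrounding discussion that $\Delta A$ need not be norm-optimal.
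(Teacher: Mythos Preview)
Your argument is correct and follows essentially the same route as the paper: shift the spectrum by adding $(\norm{A}_F+\delta)I$, then use orthogonal invariance of the Frobenius norm to bound $\norm{A}_F \ge |\lambda_{\min}(A)|$. The only cosmetic difference is that the paper detours through the SVD (writing $\norm{A}_F=\sqrt{\sum_i\sigma_i^2}\ge\sigma_{\max}=|\lambda|_{\max}$ via the Hermitian singular-value/eigenvalue identification), whereas you go straight from the spectral decomposition to $\norm{A}_F^2=\sum_i\lambda_i^2$; your version is marginally cleaner for the same reason you flag, namely that symmetry lets you work with eigenvalues directly.
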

\begin{proof}
First, we know that adding a scalar multiple of the identity to a square matrix increases its eigenvalues by precisely that scalar amount since
\begin{equation}
    Av = \lambda v \Rightarrow (A + \delta I)v = (\lambda + \delta) v.
\end{equation}
We also know that for a square Hermitian matrix, the largest singular value is equal to the largest eigenvalue in absolute value i.e. $|\lambda(A)|_{\max} = \sigma_{\max}$. Next, since multiplication by orthonormal matrices leaves the Frobenius norm unchanged, we can show that the Frobenius norm of the matrix $A$ is an upper bound on its smallest singular value.
\begin{equation}
\begin{split}
\norm{A}_F & = \norm{U \Sigma V^T}_F \\
& = \norm{\Sigma}_F \\
& = \sqrt{\sigma_1^2+\ldots+\sigma_n^2} \\
& \geq \sigma_{\max}(A) \\
& = |\lambda(A)|_{\max} \\
& \geq |\lambda_{\min}(A)|
\end{split}
\end{equation}
where $U \Sigma V^T$ is the singular-value decomposition of $A$. We can now conclude that the eigenvalues of $A + (\norm{A}_F + \delta)I$ are at least as large as $\delta$ i.e.
\begin{equation}
    \lambda_{\min}(A + (\norm{A}_F + \delta)I) \geq \delta.
\end{equation}
\end{proof}

In the second part of our algorithm we set up the KKT system corresponding to the augmented Lagrangian with inequalities treated as equalities. This is effectively sequential quadratic programming. A key challenge here is that the resulting system can end up being singular, and different solvers handle this in various ways. In our proposed algorithm, we use the Moore-Penrose pseudoinverse to handle the potential singularity. The computation of this pseudoinverse relies on a singular value decomposition (SVD), which has a known derivative \cite{PapadopouloLourakis2000}, and therefore differentiability of the overall algorithm can be maintained. Note that for numerical stability, implementations of the pseudoinverse computation must usually set singular values below some threshold to zero. To keep this important operation continuous, and therefore retain differentiability, we perform this correction using a shifted sigmoid function (denoted as $\Gamma$ in Algorithm \ref{algo:auglag}).

Note that repeated singular values can present a challenge when computing the gradient of singular value decomposition. Even though there exists methods to precisely define such gradients \cite{PapadopouloLourakis2000}, in practice we found that one of many ways of preventing to gradients from getting too large (like zeroing them) worked well enough for our applications. It is also possible to define the needed gradient using the solution of the least-squares problem the pseudoinverse is used to solve, but we leave this investigation to future work.

Since SVD is computationally expensive, its use is currently the main bottleneck of our algorithm. However there exist other methods of computing the pseudoinverse, for example, when a good initial guess is available \cite{Ben-IsraelCohen1966}, that offer promising directions to improve on this.

A significant source of non-smoothness is often related to inequality constraints. Here, we get around this problem by introducing shaped slack variables and converting the inequality constraints to equality ones, leveraging the equivalence
\begin{equation}
g(x) \leq 0 \Leftrightarrow \exists y \text{ s.t. } g(x) + \xi(y) = 0,
\end{equation}

where $\xi(y)$ is the \emph{non-normalized} softmax operation

\begin{equation}
\xi(y) = \frac{\log(e^{ky} + 1)}{k},
\end{equation}
and $k$ is some parameter that adjusts the stiffness of the operation.

\subsection{Robustness}\label{subsec:robust}
An important aspect of our algorithm is that it must be able to solve nonlinear optimization problems with potentially bad initial guesses on their solution. We handle this challenge by combining first-order method-of-multipliers steps with primal-dual second-order ones as described in Section  \ref{subsec:algooverview}. Indeed, first-order method of multipliers based on minimization of the augmented Lagrangian have been shown to have slow convergence but to be more robust to bad initial guesses \cite{Bertsekas1996}. On the other hand, second order methods have provable quadratic convergence when close to the solution. Our algorithm therefore starts by performing a few update steps of the first order method followed by  additional steps of a second order one.

\subsection{Efficiency}
An important aspect of our algorithm is that it must be efficient enough to be called multiple times by an upper solver when solving bilevel optimization problems. We therefore leverage recent advances in the field of automatic differentiation by implementing our algorithm with state of the art tools: namely the ForwardDiff.jl library \cite{RevelsLubinEtAl} and the Julia programming language \cite{BezansonKarpinskiEtAl2012}.

\section{Validity of the Algorithm on a Toy Problem with Known Solution}\label{sec:toyprob}

In order to show that our approach is capable of recovering the correct solution for various bilevel optimization problems, we compare the solution returned by our solver with the known closed-form solution of a simple problem.

The operations research literature is full of problems that can be cast in the bilevel optimization framework. The game theory literature also contains many such problems, specifically in the context of Stackelberg competitions. Here we take a simple Stackelberg competition problem from \cite{SinhaMaloEtAl2018}. In this problem, two companies are trying to maximize their profit by producing the right amount of a product that is also being sold by a competitor, and that has a value inversely proportional to its availability on a shared market.

More specifically, we are interested in optimizing the production quantity for the \emph{leader}, denoted as $q_l$, knowing that the competitor (the \emph{follower}) will respond by optimizing its own production quantity $q_f$. The price of the product on the market is dictated by a linear relationship $P(q_l,q_f)$, and the production costs for each company follow quadratic relationships $C_l(q_l)$ and $C_f(q_f)$. We therefore have the following functions
\begin{equation}
\begin{aligned}
P(q_l,q_f) = \alpha - \beta (q_l + q_f) \\
C_l(q_l) = \delta_l q_l^2 + \gamma_l q_l + c_l\\
C_f(q_f) = \delta_f q_f^2 + \gamma_f q_f + c_f,
\end{aligned}
\end{equation}
where $\alpha$, $\beta$, $\delta_l$, $\delta_f$, $\gamma_l$, $\gamma_f$, $c_l$ and $c_f$ are fixed scalars for a given instantiation of the problem. We can now write the bilevel optimization problem as
\begin{equation}
\begin{aligned}
& \underset{q_l,q_f}{\text{maximize}}
& & P(q_l,q_f)q_l - C_l(ql) \\
& \text{subject to}
& & q_f \in \underset{q_f}{\text{argmax}}\{ P(q_l,q_f)q_f - C_f(q_f) \} \\
& 
& & q_l, q_f \geq 0.
\end{aligned}
\end{equation}

For given parameters, the closed-form solution of the optimal production rate of leader is given by the expression
\begin{equation}
\begin{aligned}
q_l^* = \frac{2(\beta + \delta_f)(\alpha - \gamma_l) - \beta(\alpha - \gamma_f)}{4(\beta + \delta_f)(\beta + \delta_l) - 2 \beta^2}.
\end{aligned}
\end{equation}

In order to show that our solver can help recover the closed-form solution numerically, we define the following lower problem
\begin{equation}
\begin{aligned}
\Psi(q_l) = \underset{q_f}{\text{argmax}}\{ P(q_l,q_f)q_f - C_f(q_f): q_f \geq 0\},\\
\end{aligned}
\end{equation}
which leads to the following bilevel optimization
\begin{equation}
\begin{aligned}
& \underset{q_l}{\text{maximize}}
& & P(q_l,\psi(q_f))q_l - C_l(ql) \\
& \text{subject to}
& & q_l \geq 0.
\end{aligned}
\end{equation}

We then solve the lower problem with our proposed solver and the upper one by using SNOPT \cite{GillMurrayEtAl2005}, a state-of-the-art SQP solver. The recovered optimal value for the leader's production amount is shown in figure \ref{fig:closedform}. Even though this is a very simple problem to solve, it is clear that our proposed solver successfully recovers its solution, which is a valuable sanity check of our approach.

\begin{figure}
    \centering
    \begin{subfigure}[b]{0.49\columnwidth}
        \includegraphics[width=\linewidth]{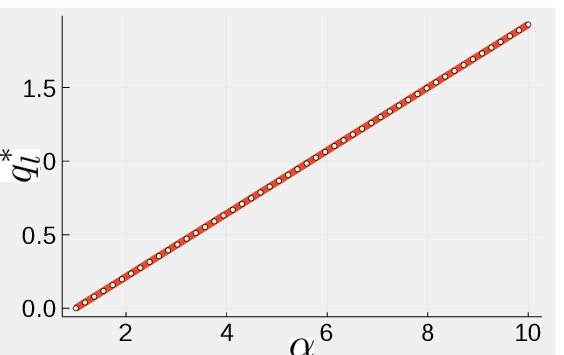}
    \end{subfigure}
    \hfill
    \begin{subfigure}[b]{0.49\columnwidth}
        \includegraphics[width=\linewidth]{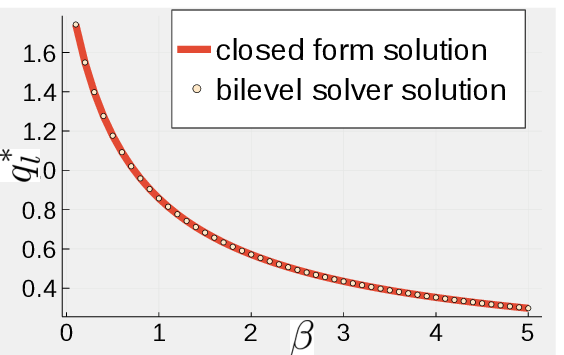}
    \end{subfigure}
    \hfill
    \begin{subfigure}[b]{0.49\columnwidth}
        \includegraphics[width=\linewidth]{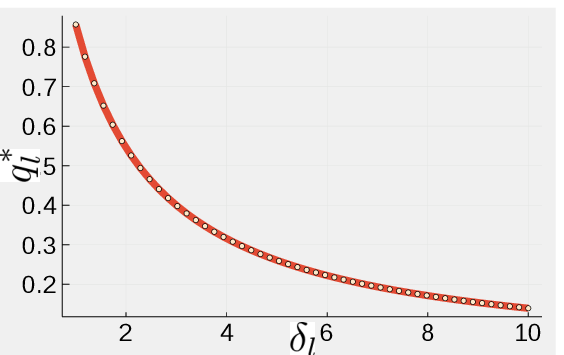}
    \end{subfigure}
    \hfill
    \begin{subfigure}[b]{0.49\columnwidth}
        \includegraphics[width=\linewidth]{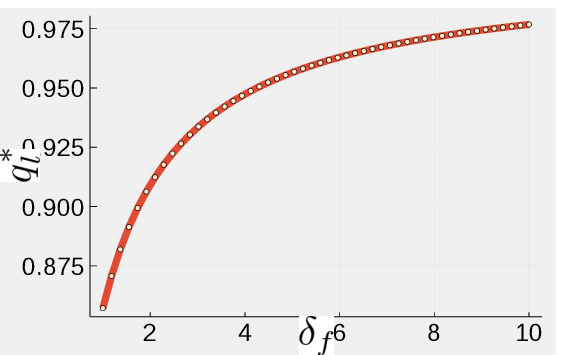}
    \end{subfigure}
    \hfill
    \begin{subfigure}[b]{0.49\columnwidth}
        \includegraphics[width=\linewidth]{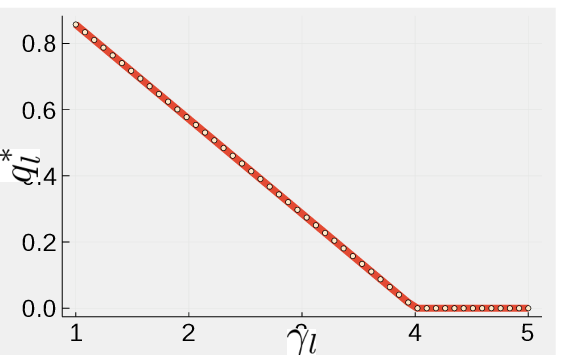}
    \end{subfigure}
    \hfill
    \begin{subfigure}[b]{0.49\columnwidth}
        \includegraphics[width=\linewidth]{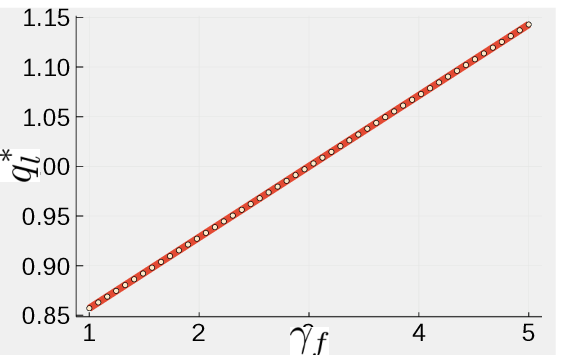}
    \end{subfigure}
    \caption{Comparison of the closed-form solution and the solution returned using our solver for a simple Stackelberg game with various parameter values. Even though this particular problem is an easy problem to solve without sophisticated machinery (a requirement to have a close-form solution available), our approach accurately recovers the optimal solution which is a useful validation of our algorithm.}
    \label{fig:closedform}
\end{figure}

\section{Validity of the Algorithm on Representative Robotics Applications} \label{subsec:robust_result}

We now demonstrate how our algorithm can find valid solutions to bilevel optimization problems corresponding to representative robotics applications. We first combine our differentiable solver with a state-of-the-art nonlinear optimization solver to recover robust trajectories in the framework of trajectory optimization. Next, we show the correctness, but also scalability, of our algorithm by applying it to the parameter estimation problem of a system with non-analytical dynamics (hard contact with friction).

\subsection{Robust Control}
When designing controllers or trajectories for robotic systems, taking potential external disturbances into account is essential. This challenge is often addressed in the framework of robust control. There are many ways to design robust controllers. Here we show how our solver allows us to design controllers that are robust to worst-case disturbances in a straightforward manner by casting the problem as a bilevel optimization problem. Notably, our approach avoids introducing additional decision variables to represent the noise which is often unavoidable with alternative optimization-based approaches to robust control such as \cite{LorenzettiLandryEtAl2019}.

More specifically, we look at the problem of designing a robust trajectory in state and input space using a direct method. Usually, trajectory optimization is performed by solving a nonlinear optimization problem in a form similar to
\begin{equation}
\begin{aligned}
& \underset{x_i,u_i;i=1\;\ldots m}{\text{minimize}}
& & \sum_{i=1}^{m}{J(x_i,u_i)} \\
& \text{subject to}
& & d(x_i,u_i,x_{i+1},u_{i+1}) = 0, i = 1, \ldots, m-1, \\
&
& & u_{\min} \leq u_i \leq u_{\max}, i = 1, \ldots, m-1,
\end{aligned}
\end{equation}
where $x_i$ is the state of the system at sample $i$ along the trajectory, $u_i$ the corresponding control input, $J$ some additive cost we are interested in minimizing along a trajectory, and $m$ is the total number of samples along the trajectory. The function $d$ captures the dynamic constraints between two consecutive time samples. This constraint depends on the chosen integration scheme. In the case of backward Euler for example, it would have the form
\begin{equation}
d(x_i,u_i,x_{i+1},u_{i+1}) := x_{i+1} - (x_i + h f(x_{i+1},u_{i+1})),
\end{equation}
where $h$ is the time between sample points $i$ and $i+1$. An in-depth exposition of direct trajectory optimization is beyond the scope of this paper and we refer the readers to \cite{Betts1998} for more information.

In order to design a trajectory that is more robust, we propose defining noise affecting the robot as the solution to a "worst-case" optimization problem. That worst-case noise can then easily be integrated in the trajectory optimization problem either as part of the objective or the constraints of the original problem giving variations of the bilevel problem
\begin{equation}
\begin{aligned}
& \underset{x_i,u_i;i=1\;\ldots m}{\text{minimize}}
& & \sum_{i=1}^{m}{J(x_i,u_i,\Psi(x_i,u_i))} \\
& \text{subject to}
& & d(x_i,u_i,x_{i+1},u_{i+1},\Psi(x_{i+1},u_{i+1})) = 0, \\
&
& & \hspace{40mm} i = 1, \ldots, m-1, \\
&
& & g(\Psi(x_i,u_i)) \leq 0, i = 1, \ldots, m-1,\\
&
& & u_{\min} \leq u_i \leq u_{\max}, i = 1, \ldots, m-1,
\end{aligned}
\end{equation}
where $\Psi(x,u)$ is the solution of the lower problem that computes the worst-case noise disturbance. In the general case, this lower problem can take many forms. We offer a specific example below.

For our specific example, we applied this technique to the trajectory optimization problem for a 7 degrees-of-freedom arm with a large flat plate on its end effector. In this example, the upper optimization problem contains the arm's dynamics, the start and end configuration constraints and the objective is to minimize the arm's velocity along the trajectory. Figure \ref{fig:nonrobustimg} shows the resulting trajectory when nothing with respect to potential disturbances (i.e., $\Psi(x,u) = 0$) is added to this trajectory optimization.

\begin{figure*}[ht]
    \centering
    \begin{subfigure}[t]{.195\textwidth}
        \includegraphics[trim={11cm 5cm 5cm 5cm},clip,width=\textwidth]{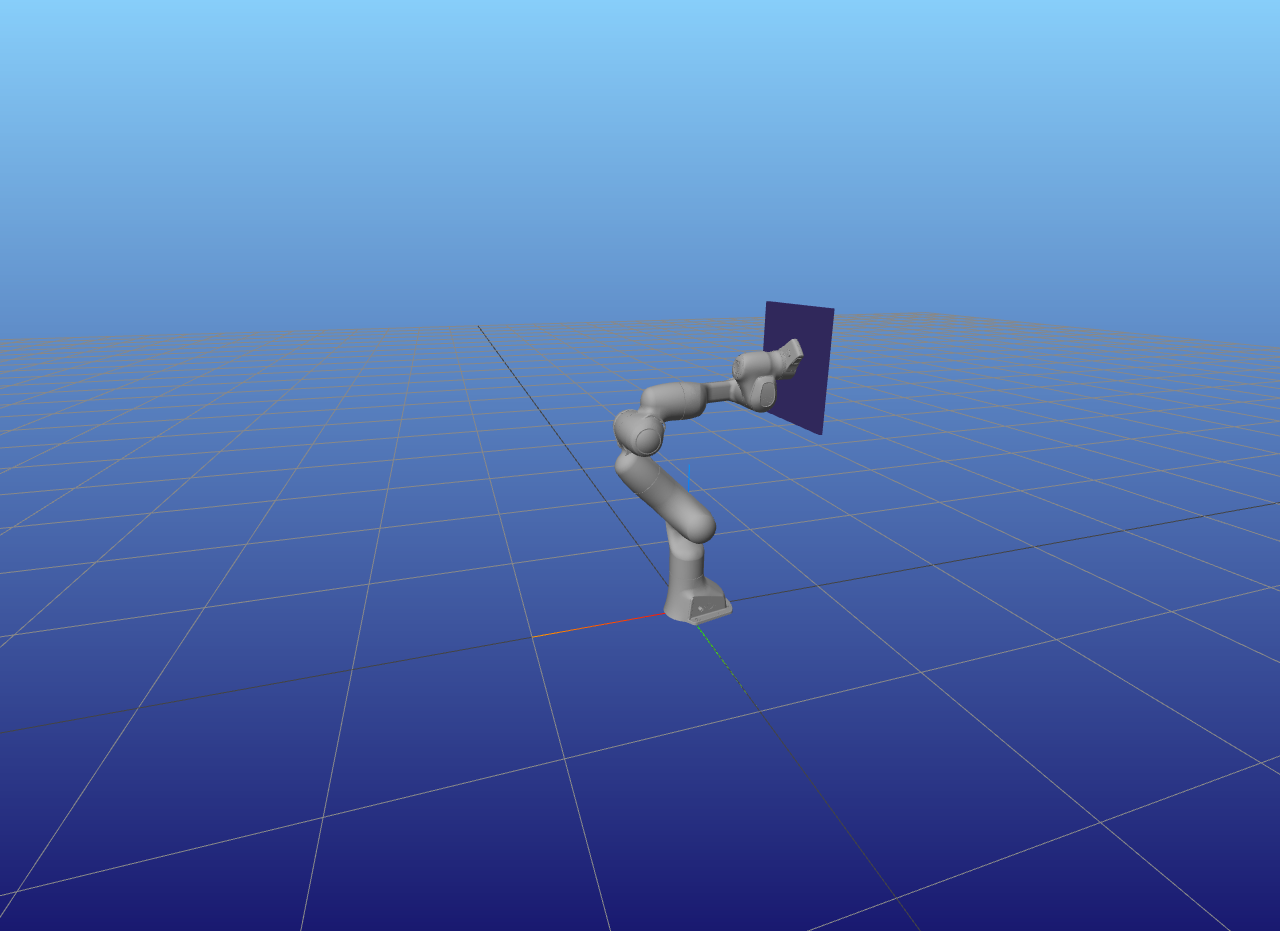}
    \end{subfigure}
    \begin{subfigure}[t]{.195\textwidth}
        \includegraphics[trim={11cm 5cm 5cm 5cm},clip,width=\textwidth]{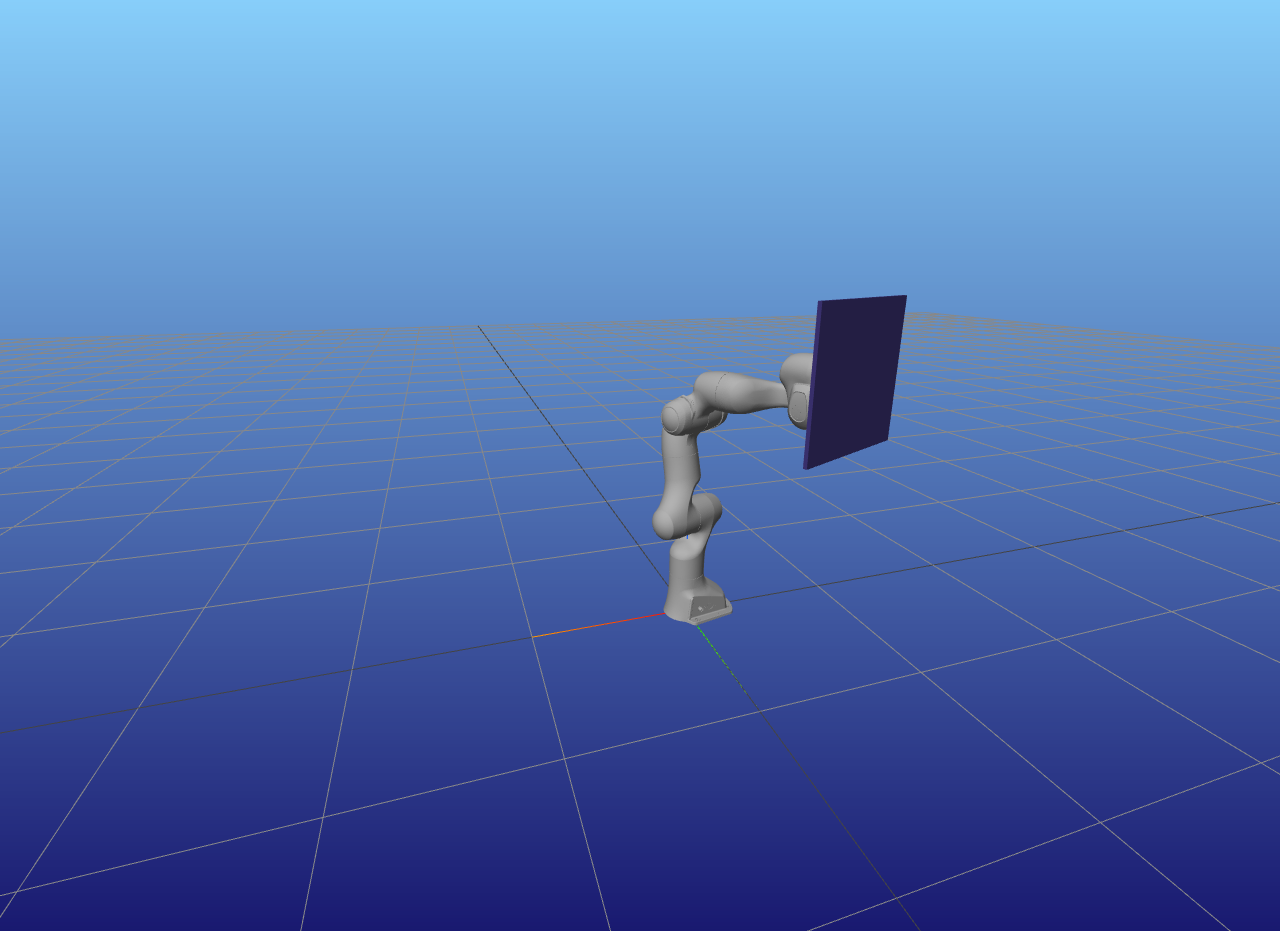}
    \end{subfigure}
    \begin{subfigure}[t]{.195\textwidth}
        \includegraphics[trim={11cm 5cm 5cm 5cm},clip,width=\textwidth]{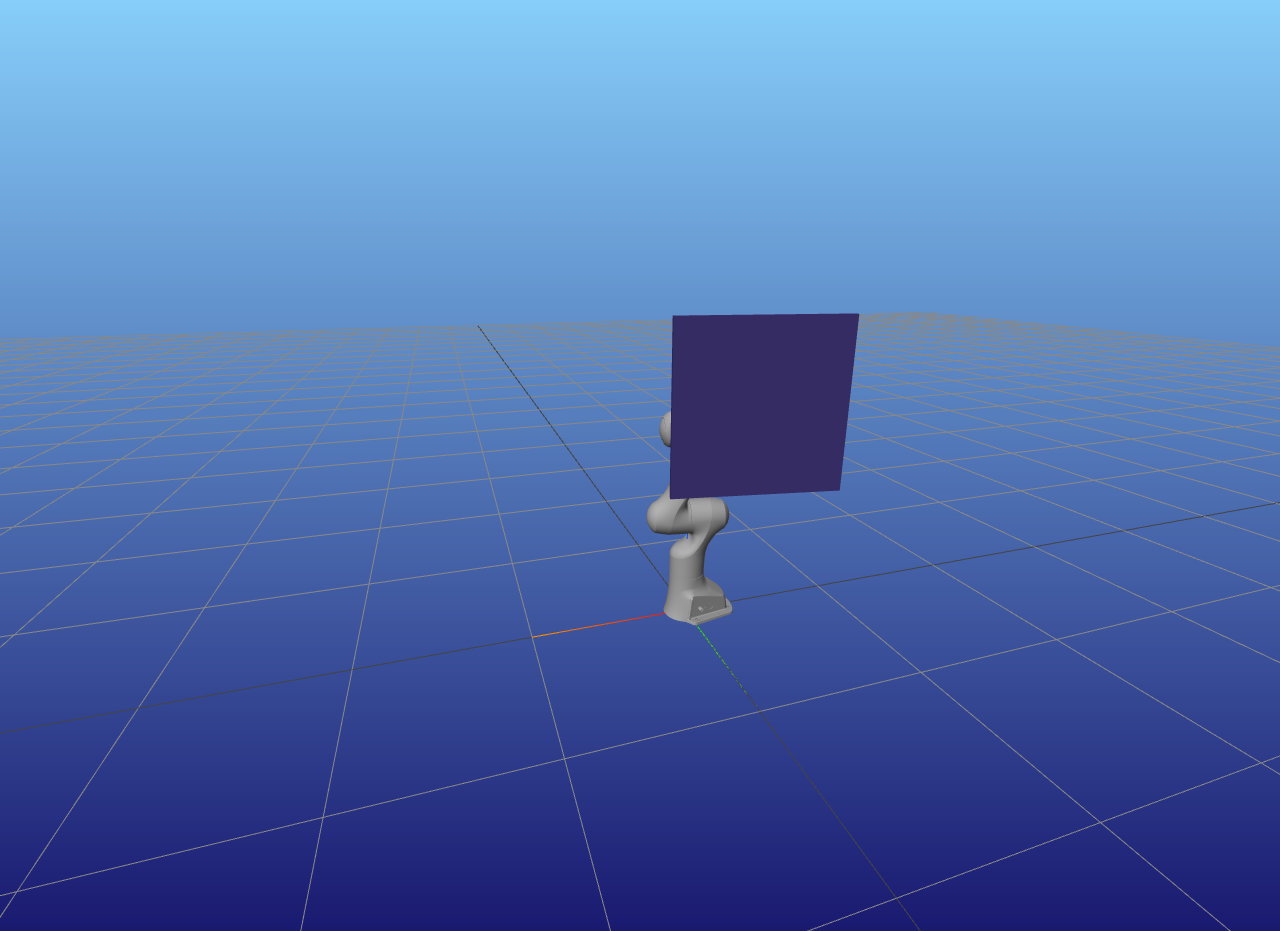}
    \end{subfigure}
    \begin{subfigure}[t]{.195\textwidth}
        \includegraphics[trim={11cm 5cm 5cm 5cm},clip,width=\textwidth]{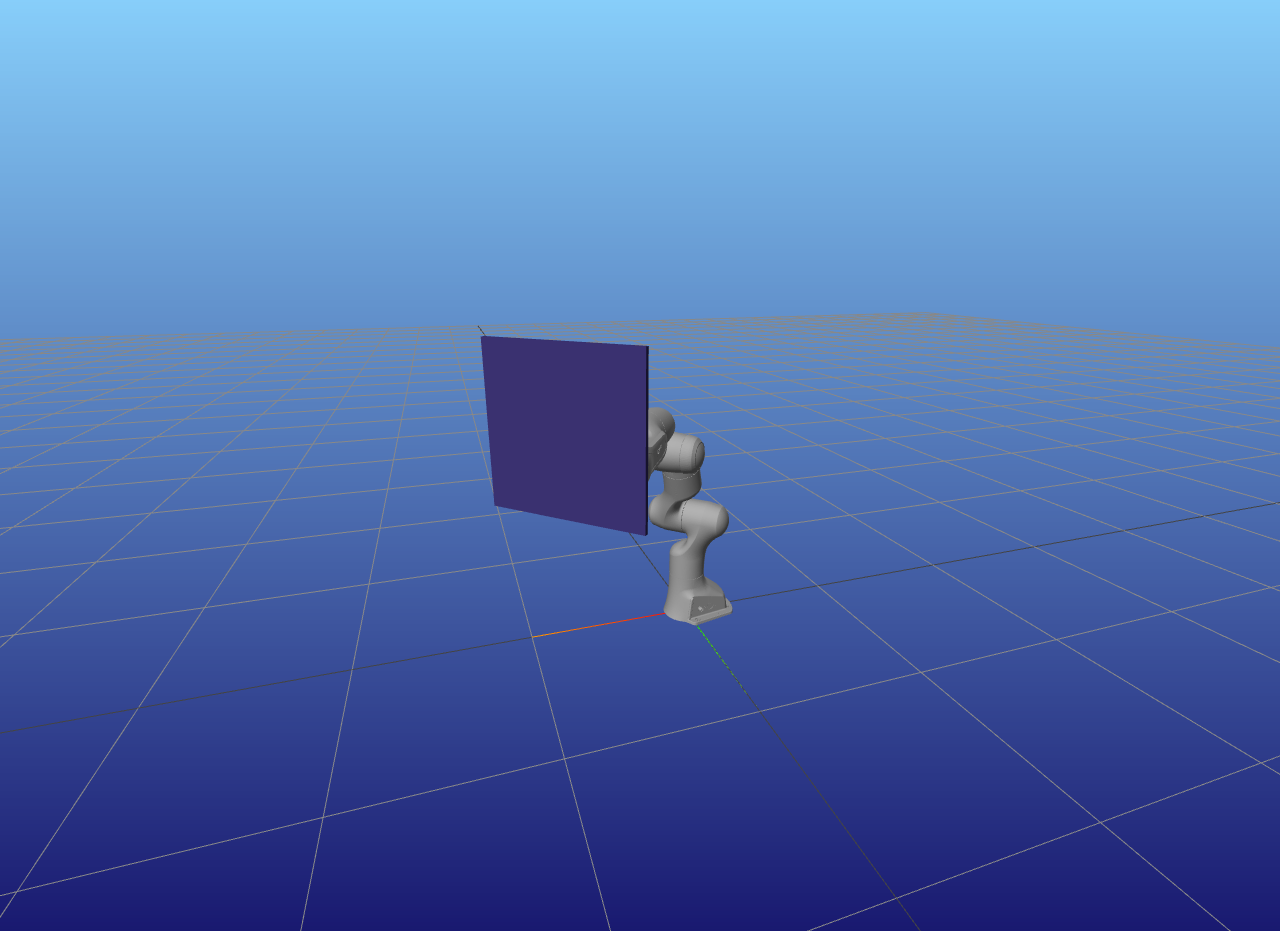}
    \end{subfigure}
    \begin{subfigure}[t]{.195\textwidth}
        \includegraphics[trim={11cm 5cm 5cm 5cm},clip,width=\textwidth]{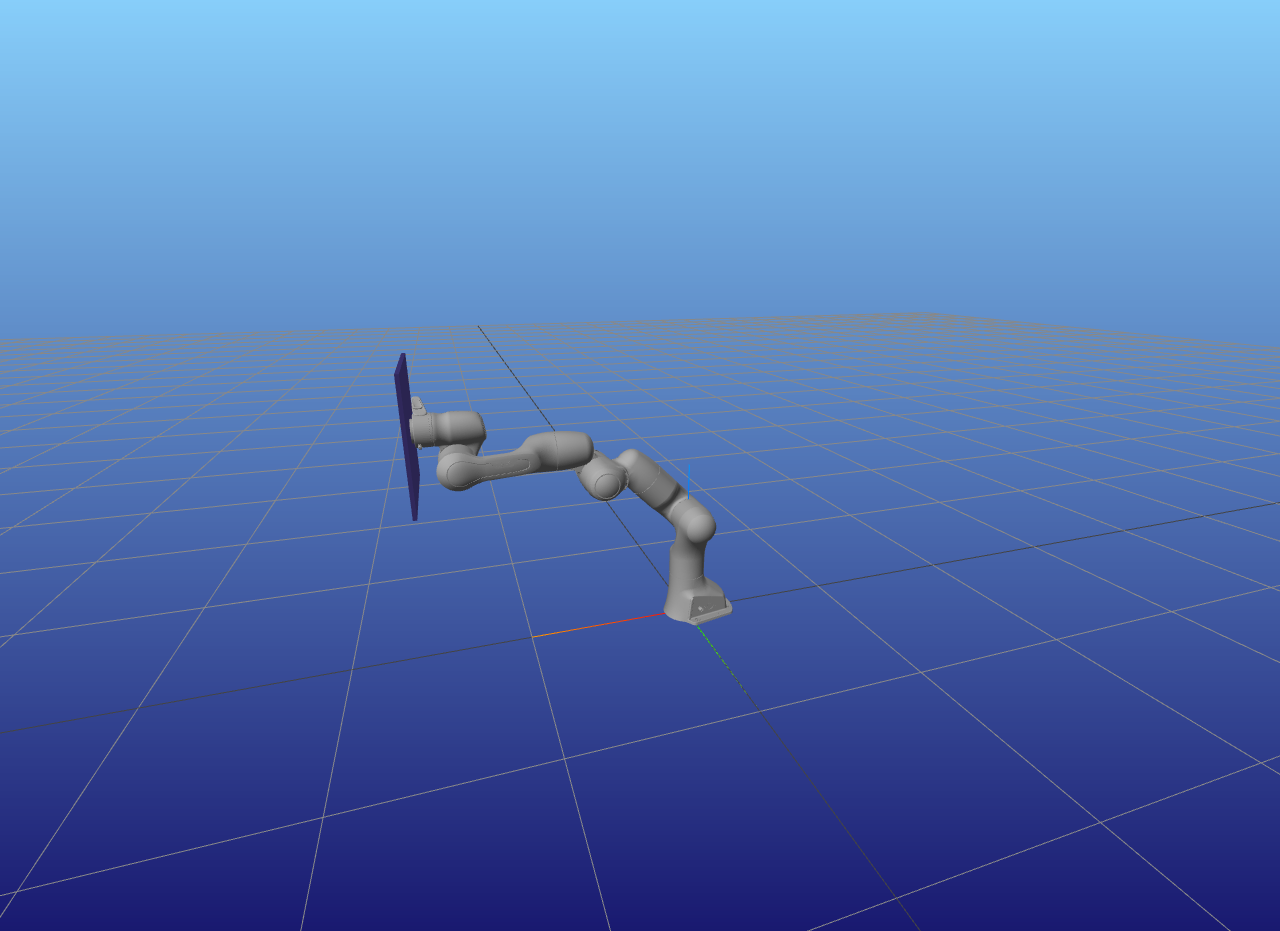}
    \end{subfigure}
    \caption{Trajectory that only minimizes joint velocities, with no regards to potential external disturbances.}
    \label{fig:nonrobustimg}
\end{figure*}

\begin{figure*}[ht]
    \centering
    \begin{subfigure}[t]{.195\textwidth}
        \includegraphics[trim={11cm 5cm 5cm 5cm},clip,width=\textwidth]{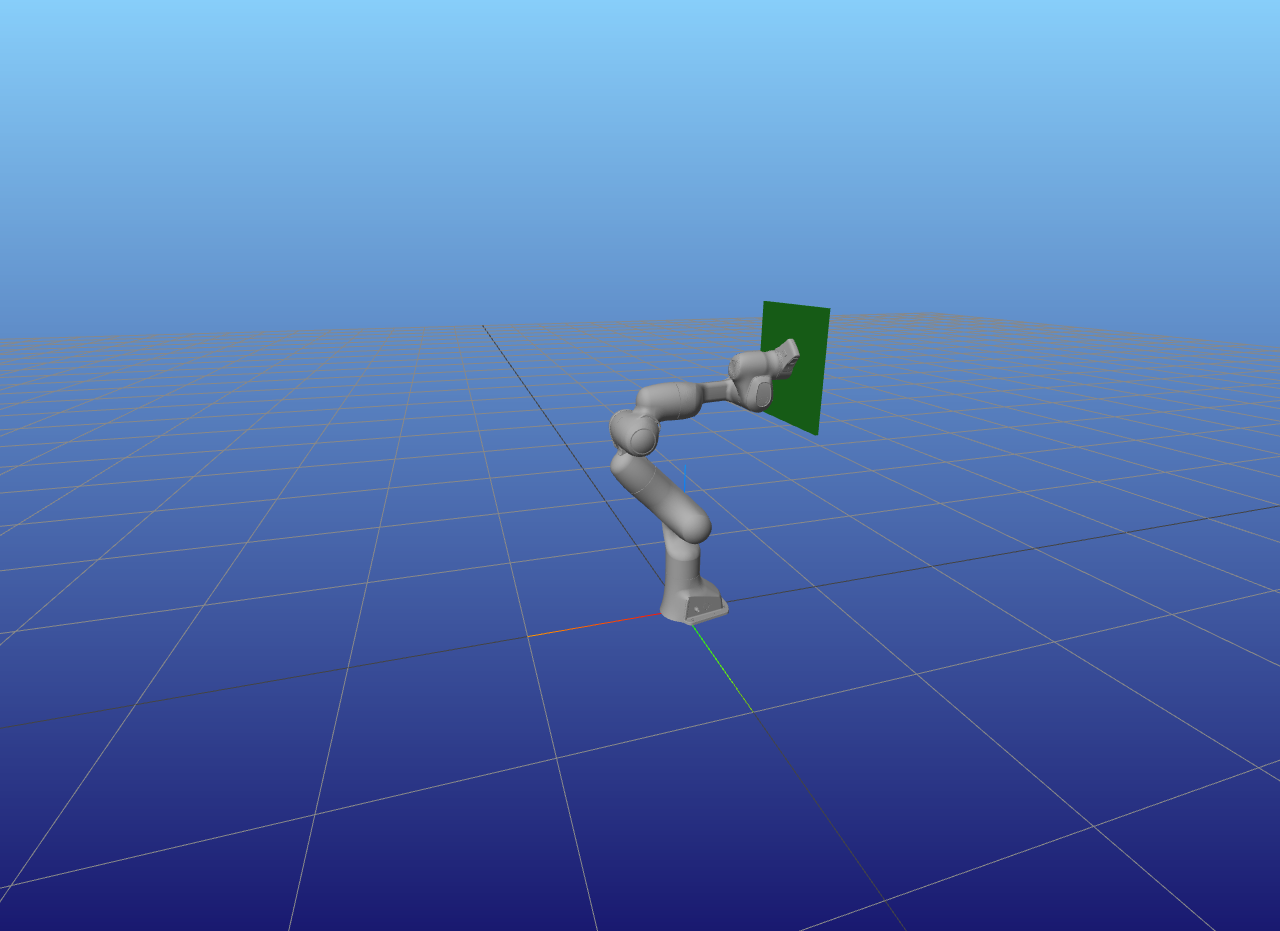}
    \end{subfigure}
    \begin{subfigure}[t]{.195\textwidth}
        \includegraphics[trim={11cm 5cm 5cm 5cm},clip,width=\textwidth]{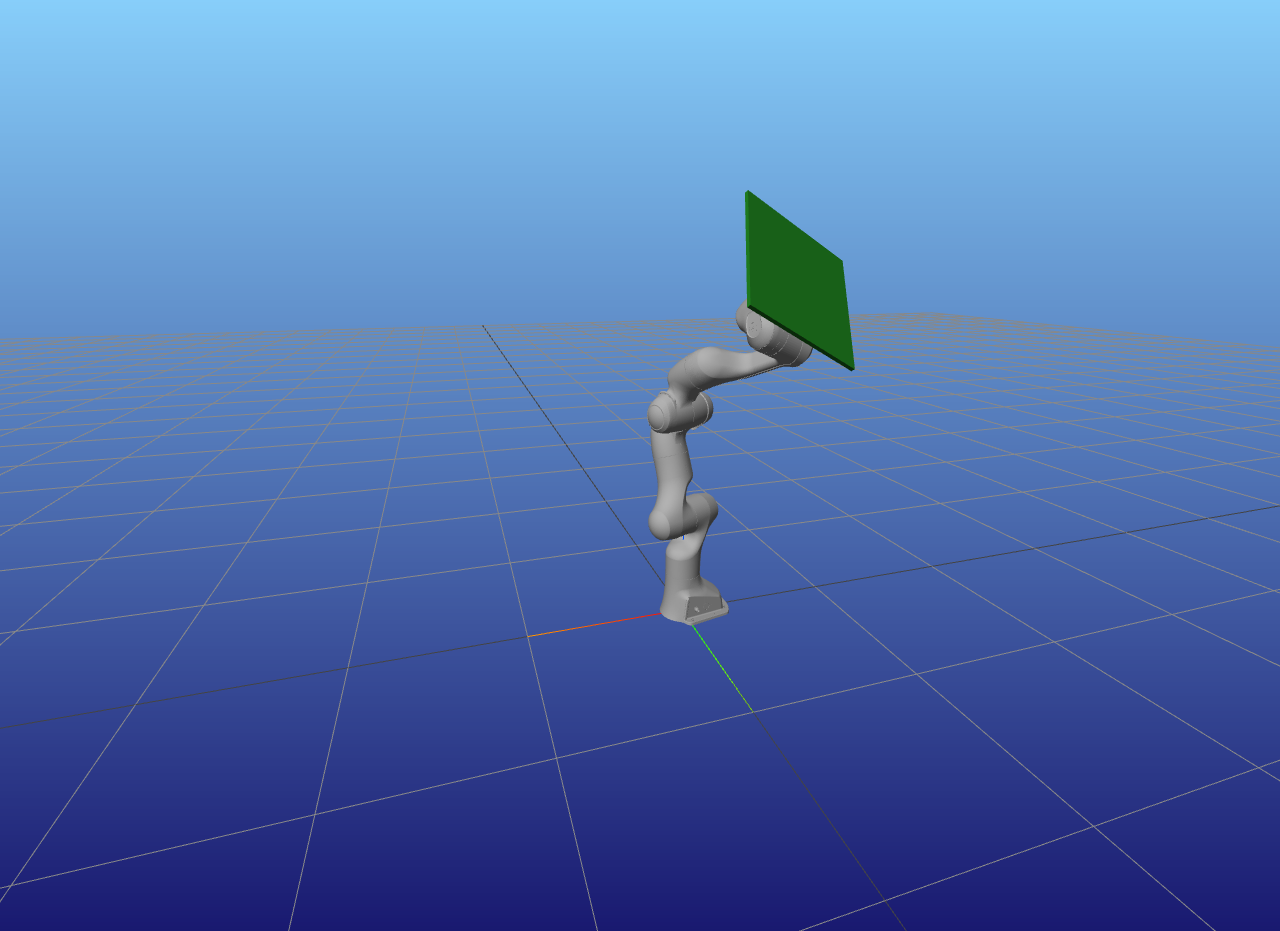}
    \end{subfigure}
    \begin{subfigure}[t]{.195\textwidth}
        \includegraphics[trim={11cm 5cm 5cm 5cm},clip,width=\textwidth]{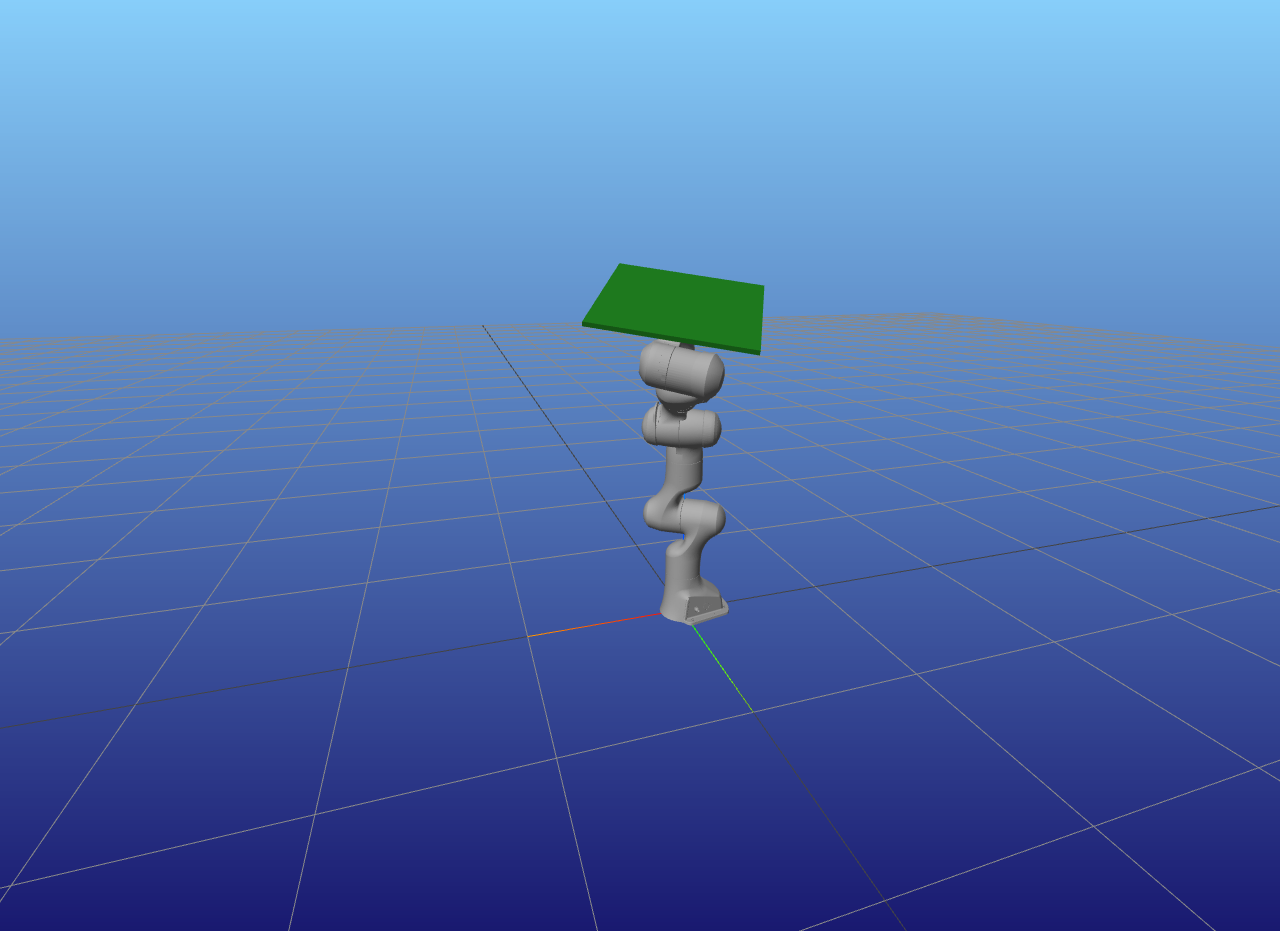}
    \end{subfigure}
    \begin{subfigure}[t]{.195\textwidth}
        \includegraphics[trim={11cm 5cm 5cm 5cm},clip,width=\textwidth]{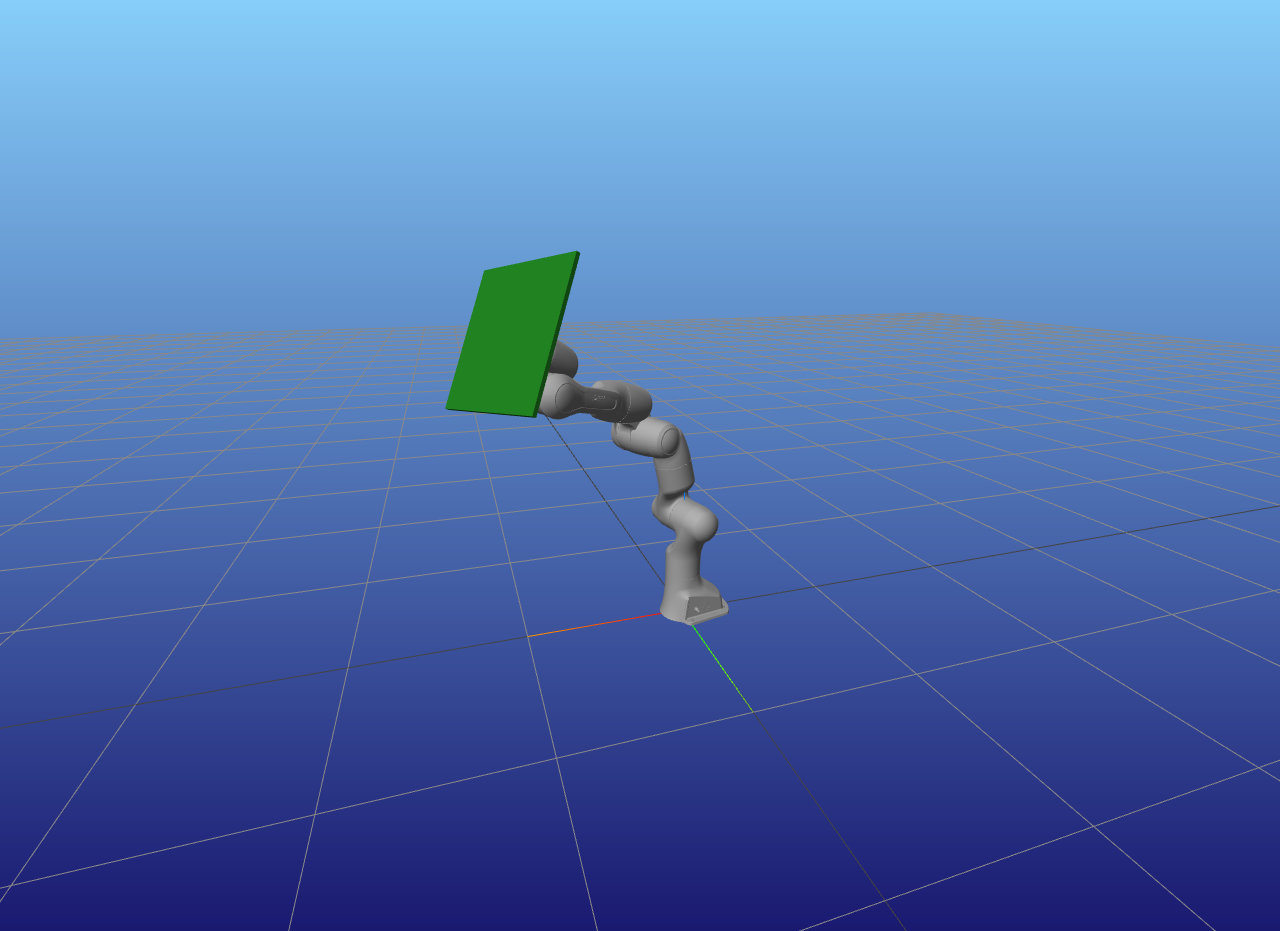}
    \end{subfigure}
    \begin{subfigure}[t]{.195\textwidth}
        \includegraphics[trim={11cm 5cm 5cm 5cm},clip,width=\textwidth]{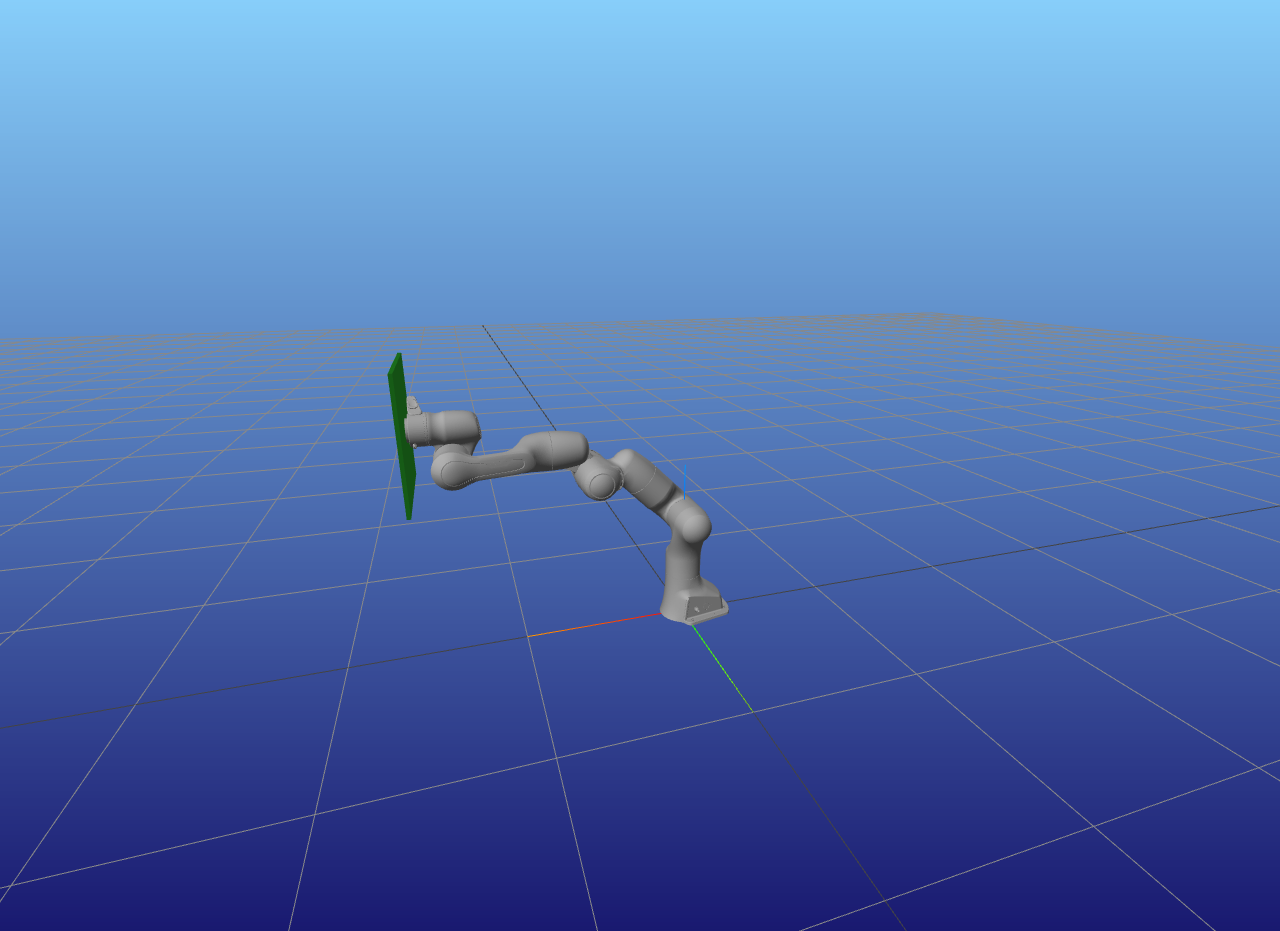}
    \end{subfigure}
    \caption{Trajectory that minimizes joint velocities and worst-case wind gust disturbance using bilevel optimization and our proposed solver. The wind gust has a smaller vertical bound than overall norm bound, and affects the arm proportionally to the effective area of the plate in the free stream.}
    \label{fig:robustimg}
\end{figure*}

Next we define a noise model for this example, namely a simple wind disturbance that has an overall $L_2$ norm constraint, but also has a smaller absolute value constraint on its vertical component (i.e. vertical wind gust strength is more limited than overall wind strength) making the noise model simple but not trivial. The wind affects the arm the most when it is perpendicular to the plate (which maximizes the effective area, and therefore drag, of the plate). Together, these define our worst-case noise model as the solution to the lower problem
\begin{equation}
\begin{aligned}
& \Psi(x,u) = 
& & \underset{w}{\text{argmin}}\{ P(x)^T w: \norm{w}_2 \leq 2, |w_z| \leq 1 \},\\
\end{aligned}
\end{equation}
where $P(x)$ computes the normal vector at the center of the flat plate given the current arm configuration $x$ and $w$ is the vector representing the wind strength and direction. Figure \ref{fig:robustimg} shows the resulting trajectory after including the bilevel noise model in the objective as 
\begin{equation}
J(x_i,u_i) = u_i^2 + 10 (P(x_i)^T \Psi(x_i,u_i))^2.
\end{equation}

Figure \ref{fig:robustobjplot} is then computed by taking the resulting trajectory and solving the lower problem to optimality with a state-of-the-art solver, SNOPT, as to avoid the possibility that our solver is returning inaccurate solutions of the lower problem. As can be seen in that plot, potential wind gust disturbance is effectively minimized throughout the trajectory, which is achieved by tilting the plate horizontally as it travels from the desired initial to final configuration.

\begin{figure}
    \centering
    \includegraphics[width=\columnwidth]{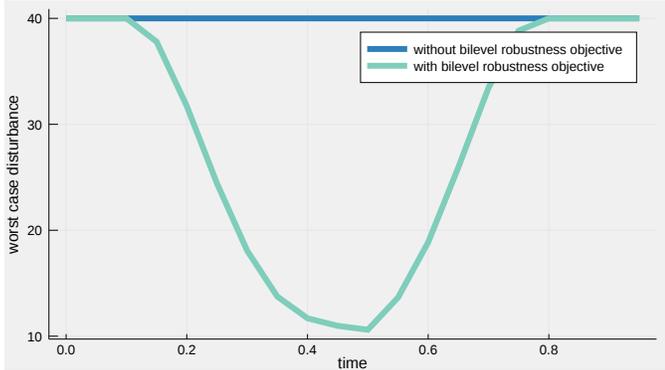}
    \caption{Comparison of the worst-case disturbance along a trajectory that was computed without taking disturbances into account in its objective, and one that did by exploiting the bilevel structure of the problem and our differentiable solver.}
    \label{fig:robustobjplot}
\end{figure}

Next, figure \ref{fig:robustconplot} shows the result of including the noise model as a constraint of the upper problem instead of as an objective
\begin{equation}
\Psi(x_i,u_i) \leq 25, i = 9,\ldots,11.
\end{equation}

Note that the constraint was only enforced on three samples in the middle of the trajectory to show the effect of the constraints more clearly. Once again, the resulting trajectory has the desired properties.

\begin{figure}
    \centering
    \includegraphics[width=\columnwidth]{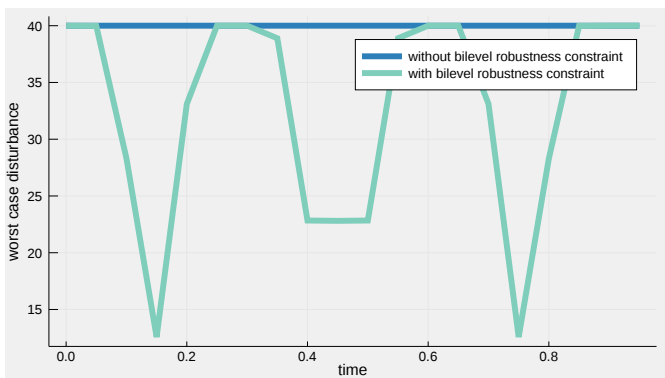}
    \caption{Comparison of the worst-case disturbance along a trajectory that was computed without taking disturbances into account as a constraint, and one that did for the middle three samples by exploiting the bilevel structure of the problem and our differentiable solver.}
    \label{fig:robustconplot}
\end{figure}

\subsection{Parameter Estimation}\label{subsec:param}

We now demonstrate the generality of our approach by applying it to the significantly different problem of parameter estimation. This application of our algorithm also provides an opportunity to demonstrate its scalability.

There are many ways to estimate parameters of a dynamical system. A common approach is to solve a mathematical program with the unknown parameters as decision variables and a dynamics residual evaluated over a observed sequences of states in the constraints or the objective of that problem. 

However, some systems, especially ones for which analytical dynamics are not always available, can bring additional challenges to this parameter estimation approach. For example, dynamics involving hard contact, unless approximations of the contact dynamics are made, are best expressed as solutions to optimization problems involving complementarity constraints.

Given a trajectory for a system that involves hard contact, formulating an optimization problem for parameter estimation requires not only the parameters as decision variables, but also the introduction of a set of decision variables for each contact and at each sample in the dataset. Clearly, this approach scales poorly, because the number of decision variables increases with each contact point and each sample added to the dataset. Scaling to the number of samples is also especially important in the presence of noisy measurements, because in such cases the qualities of the parameter estimates are proportional to the number of samples included.

Here, we demonstrate how our differentiable solver allows us to address this problem in a more scalable manner by casting it as a bilevel optmization problem and leveraging parallelism. The bilevel problem we propose to solve defines the contact dynamics in a lower problem, while solving the parameter estimation problem in the upper one. The key insight here is that even though we reduce the number of decision variables in the upper problem to match the number of parameters, we make the constraint evaluation significantly more expensive. However this can be compensated by parallelizing this constraint evaluation. This then allows our algorithm to regress parameters using more samples than the alternative approach.

First, we simulate the resulting trajectory of a 7 degrees of freedom arm pushing on a box that slides across the floor. Figure \ref{fig:pushimg} shows the types of trajectories the interaction produces. The contact points simulated are the four corners of the box and the arm's end effector. The contact dynamics are the widely popular ones introduced in \cite{StewartTrinkle2000}, although we do not linearize the dynamics. The rigid body dynamics are computed using the RigidBodyDynamics.jl package \cite{Koolencontributors2016}. The resulting trajectories are the sample points we use for parameter estimation. Defining $\mathcal{M}$ as the well-known manipulator equation
\begin{equation}
\begin{aligned}
    & \mathcal{M}(q_0,v_0,u,q,v,\Lambda) \\
    & \hspace{5mm} = H(q_0)(v - v_0) + C(q,v)  \\ 
    & \hspace{10mm} + G(q) - B(q,u) - P(q,v,\Lambda),
\end{aligned}
\end{equation}
where $H$ is the inertial matrix, $C$ captures Coriolis forces, $G$ captures potentials such as gravity, $B$ maps control inputs to generalized forces and $P$ maps external contact forces to generalized forces and $\Lambda$ are the external contact forces. We then formulate the following lower optimization problem
\begin{figure*}[ht]
    \centering
    \begin{subfigure}[t]{.195\textwidth}
        \includegraphics[trim={8cm 5cm 11cm 5cm},clip,width=\textwidth]{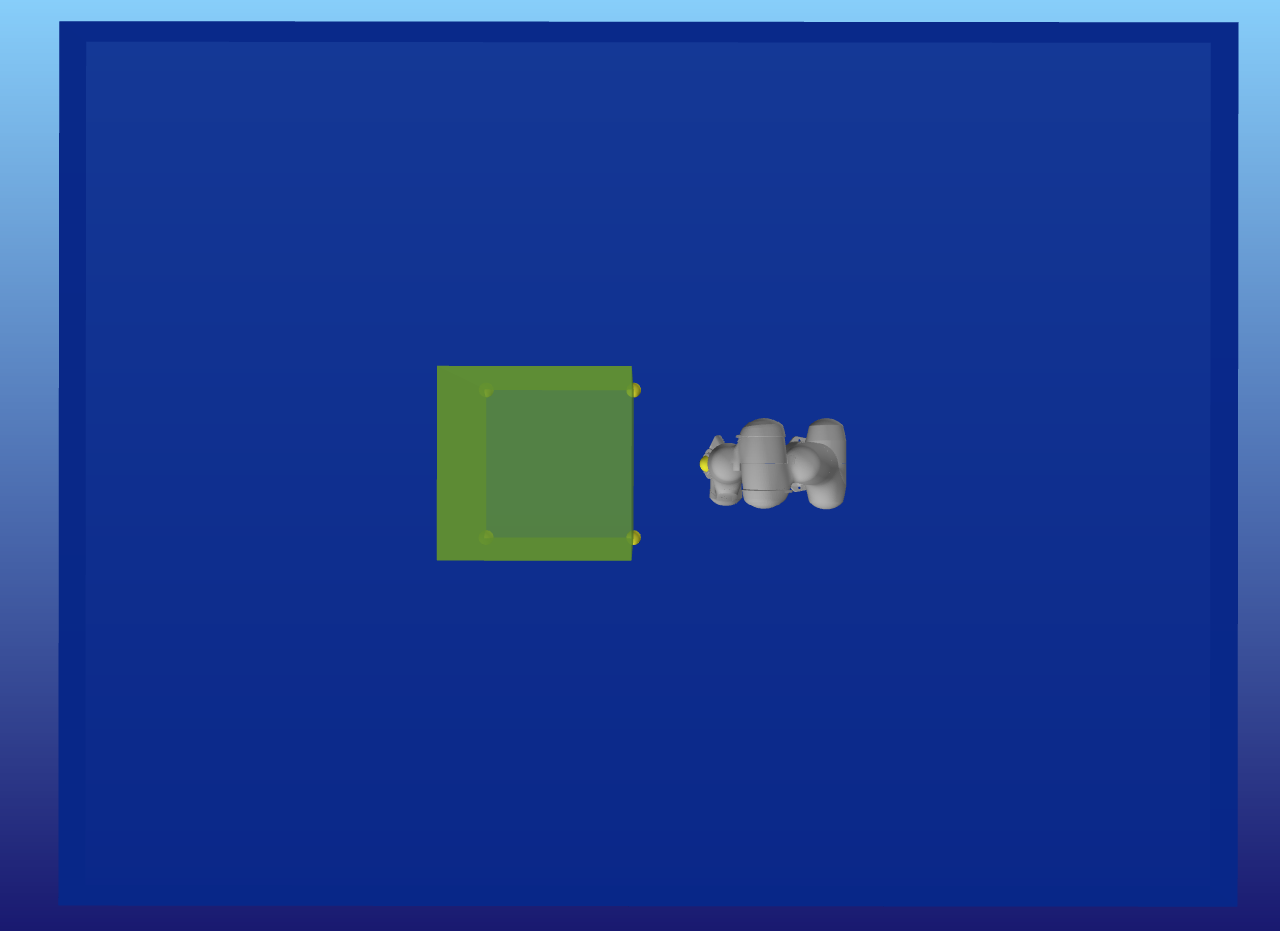}
    \end{subfigure}
    \begin{subfigure}[t]{.195\textwidth}
        \includegraphics[trim={8cm 5cm 11cm 5cm},clip,width=\textwidth]{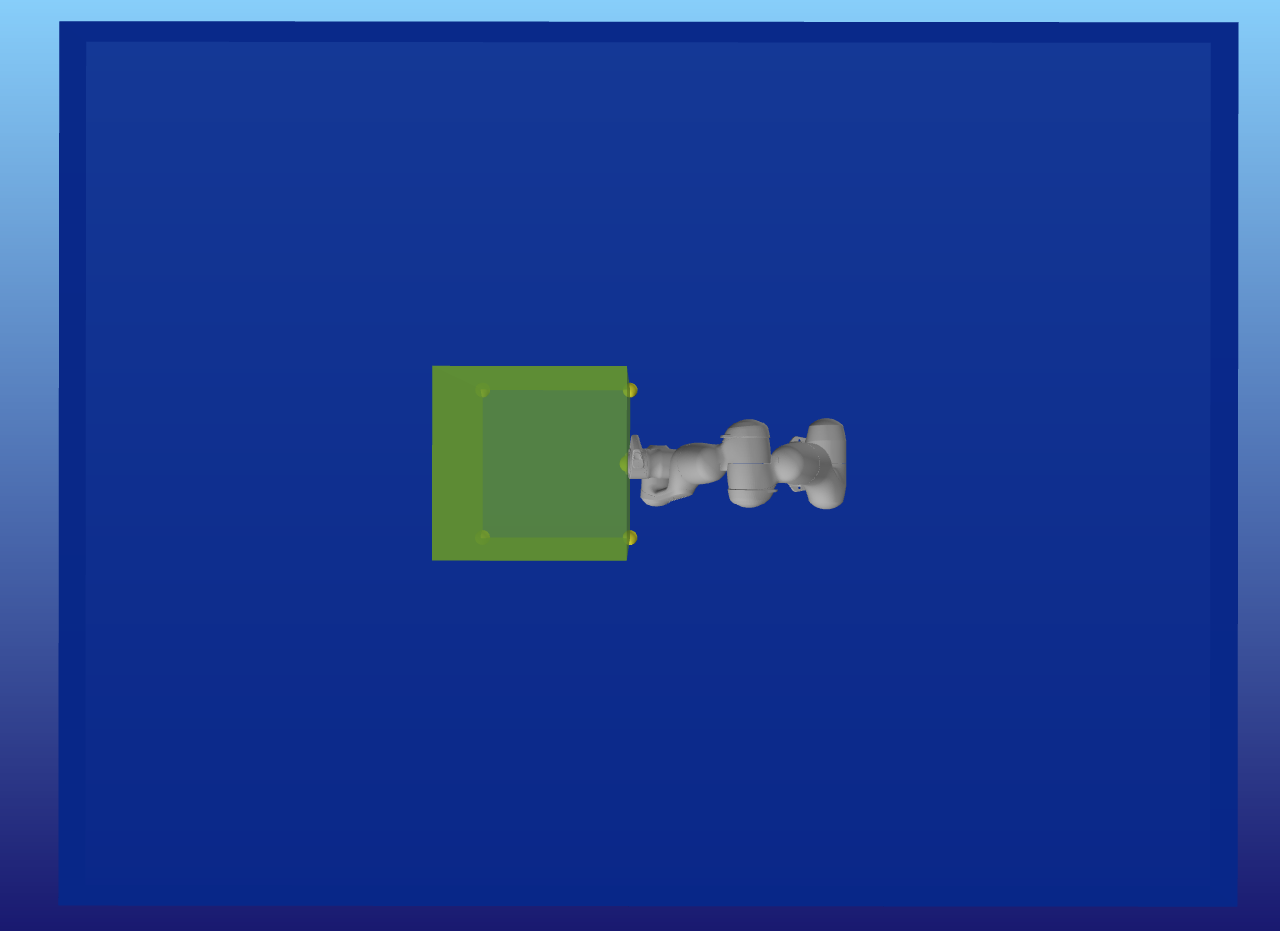}
    \end{subfigure}
    \begin{subfigure}[t]{.195\textwidth}
        \includegraphics[trim={8cm 5cm 11cm 5cm},clip,width=\textwidth]{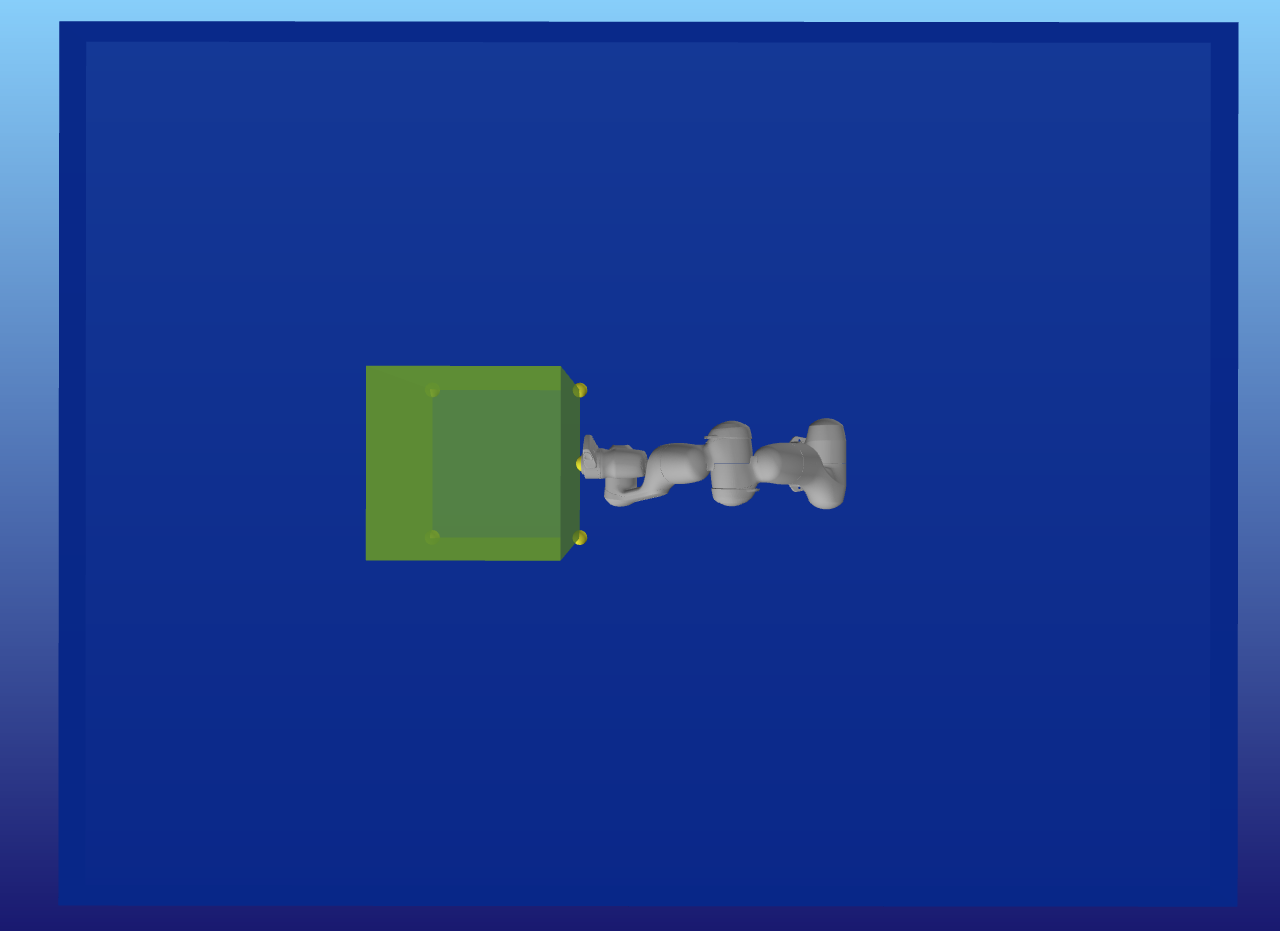}
    \end{subfigure}
    \begin{subfigure}[t]{.195\textwidth}
        \includegraphics[trim={8cm 5cm 11cm 5cm},clip,width=\textwidth]{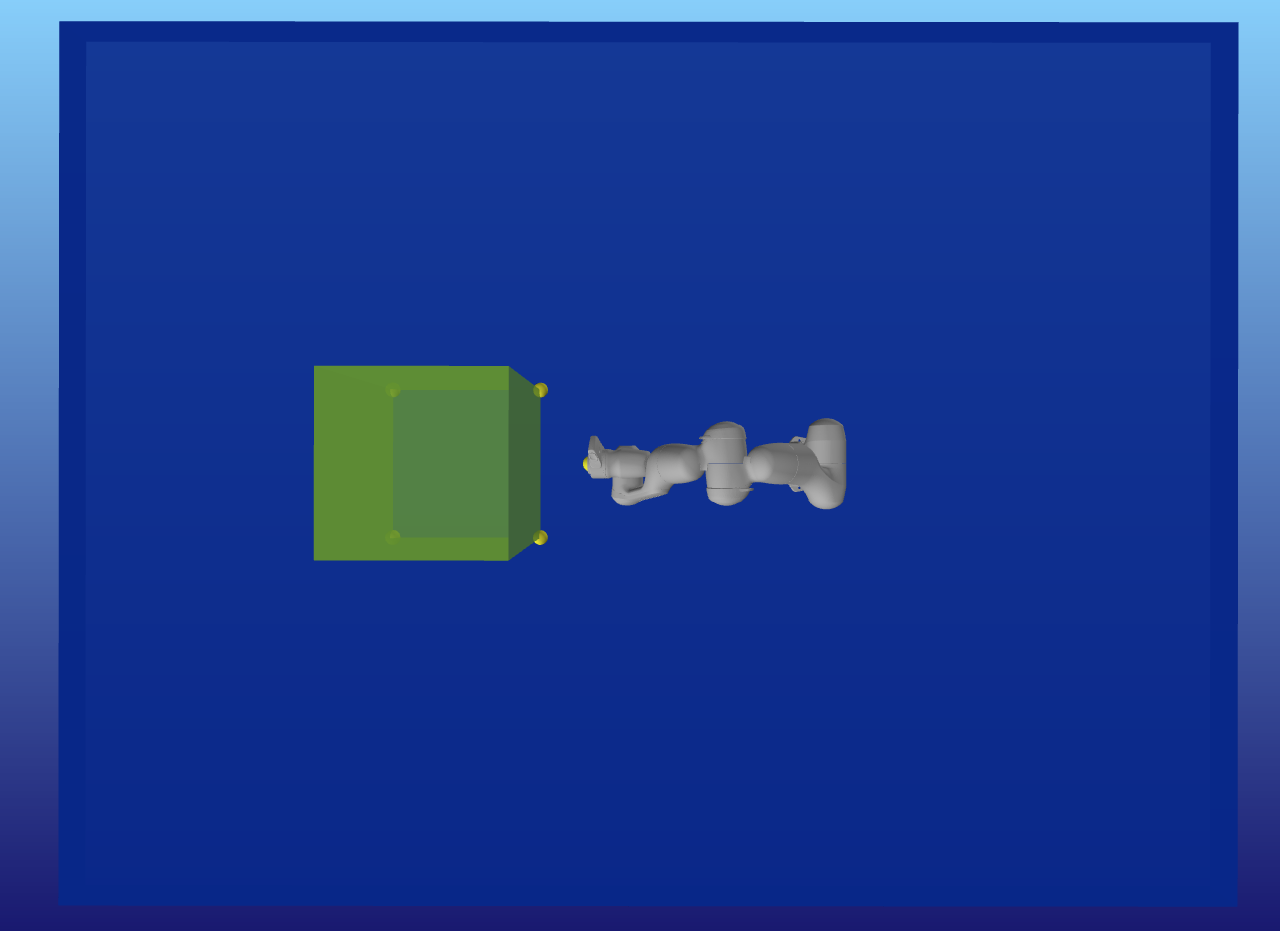}
    \end{subfigure}
    \begin{subfigure}[t]{.195\textwidth}
        \includegraphics[trim={8cm 5cm 11cm 5cm},clip,width=\textwidth]{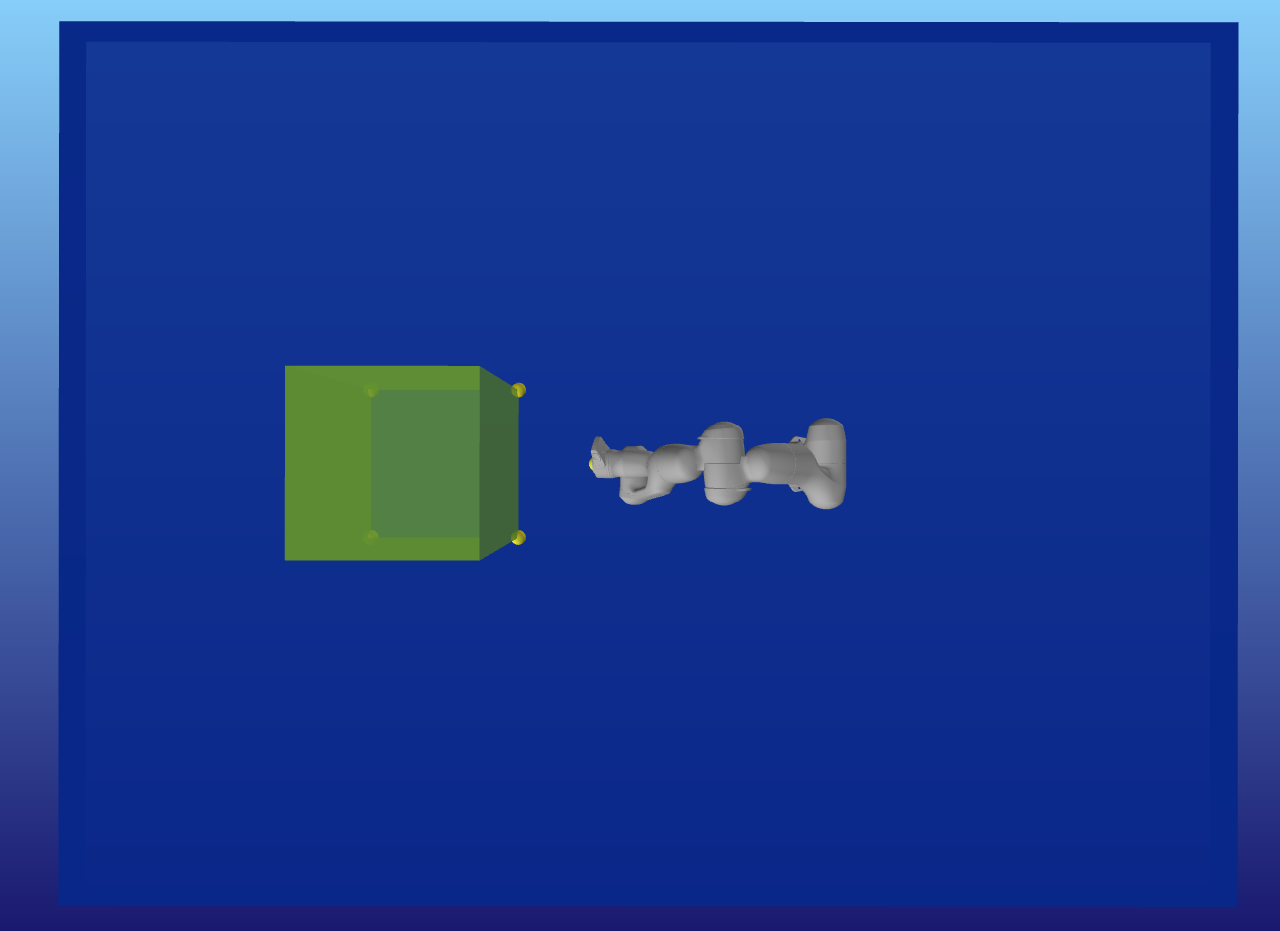}
    \end{subfigure}
    \caption{Simulation of a 7 degrees of freedom robotic arm pushing a box that slides across the floor to estimate the friction coefficients.}
    \label{fig:pushimg}
\end{figure*}
\begin{equation}
\begin{aligned}
& \Psi(q_0,v_0,u,q,v) = 
& & \underset{\Lambda = \{\beta,\lambda,c_n\}}{\text{argmin}}\{ \norm{\beta}^2 + \norm{c_n}^2: \\
& 
& & \mathcal{M}(q_0,v_0,u,q,v,\Lambda) = 0, \\
&
& & \lambda \mathbf{e} + D(q)^Tv \geq 0, \\
&
& & \mu c_n - \mathbf{1}^T\beta \geq 0, \\
&
& & \phi(q) c_n = 0, \\
& 
& & (\lambda \mathbf{e} + D(q)^T v)^T \beta = 0,\\
&
& & (\mu c_n - \mathbf{1}^T \beta) \lambda = 0,\\
&
& & \beta, c_n, \lambda \geq 0 \},
\end{aligned}
\end{equation}
where the components of $\Lambda$, namely $\beta$, $c_n$ and $\lambda$ are variables describing the friction and normal forces and $D$ computes the contact friction basis. We refer the readers to \cite{StewartTrinkle2000} for a more complete description of the contact dynamics used here. We then solve the following upper optimization problem
\begin{equation}
\begin{aligned}
& \underset{\mu}{\text{minimize}}
& & \sum_{i=1}^{N}{\mathcal{M}(q_i,v_i,u_{i+1},q_{i+1},v_{i+1},\Psi(.))_2^2} \\
& \text{subject to}
& & 0 \leq \mu \leq 1, \\
\end{aligned}
\end{equation}
with $(q_i,v_i,u_{i+1},q_{i+1},v_{i+1})$ as inputs to $\Psi$ using instances of our solver running across multiple threads to handle the lower problem and SNOPT to take care of the upper one.

In our specific parallelization scheme, one thread corresponds to one sample in the dataset, but other parallelization schemes are also possible. Note that if the samples are noisy, the lower problem is not necessarily feasible. In such cases, our solver can be interpreted as minimizing the residual of the dynamics. Also note that since we are not using the KKT conditions to retrieve the gradient of the lower problem, not solving it to optimality or even feasibility still results in well-defined gradients.

Table \ref{tab:estimation} contains the results of running the estimation approaches for various numbers of samples and different friction coefficients. The important take-away from these results is even though our current implementation of the solver is slower than doing the parameter estimation classically for small problems, our approach scales better with the number of samples thanks to its straighforward parallelization. Conceptually, this is true for the same reasons that it is simple to compute losses over samples in parallel when doing gradient descent in machine learning contexts, except in this case the ``loss" is itself a complex nonlinear optimization.

\newcolumntype{g}{>{\columncolor{Gray}}c}
\begin{table}[]
\begin{tabular}{|g|c|c|c|c|c|c|}
\hline
\multicolumn{2}{|c|}{}  & \multicolumn{2}{|>{\cellcolor{Green}}c|}{\color{GrayPlot}\textbf{Bilevel}} & \multicolumn{3}{|>{\cellcolor{Blue}}c|}{\color{GrayPlot}\textbf{Classical}}\\
\hline
\multicolumn{1}{|c|}{\textbf{$\mu$}} & \multicolumn{1}{|c|}{\textbf{\# samples}} &
\multicolumn{1}{|p{9mm}|}{\textbf{time (s)}} & 
\multicolumn{1}{|p{8mm}|}{\textbf{scaled}} &
\multicolumn{1}{|p{9mm}|}{\textbf{time (s)}} & 
\multicolumn{1}{|p{8mm}|}{\textbf{scaled}} & \multicolumn{1}{|p{7mm}|}{\textbf{\# var}} \\
\hline
.23 & 6 & 4.0 & 1.0 & .10 & 1.0 & 217 \\
.19 & 11 & 7.5 & 1.9 & .32 & 3.1 & 397 \\
.19 & 19 & 12.6 & 3.2 & .96 & 9.6 & 685 \\
.23 & 21 & 14.2 & 3.6 & 1.3 & 13.0 & 757 \\
.19 & 23 & 16.4 & 4.1 & 1.4 & 14.0 & 829 \\
.15 & 34 & 23.5 & 5.9 & 3.5 & 35.0 & 1225 \\
.15 & 44 & 30.7 & 7.7 & 24.55 & 246.0 & 1585 \\
\hline
\end{tabular}
\caption{Results of estimating the friction coefficients on data produced by a simulation with hard contact. Our approach, labeled as "Bilevel", scales better with respect to the number of samples than the alternative approach, labeled as "Classical", that introduces decision variables for the unobserved contact forces in the dataset.}
\label{tab:estimation}
\end{table}

% .19 & 26 & 1 & 12.7 & \cellcolor{Red} x & \cellcolor{Red} x & \cellcolor{Red} 937 \\
% .15 & 34 & 1 & 24.3 & \cellcolor{Red} x & \cellcolor{Red} x & \cellcolor{Red} 1225 \\

\section{Conclusion} 
\label{sec:conclusion}
In this work, we develop a differentiable general purpose nonlinear program solver based on augmented Lagrangian methods. We then apply our solver to a toy example and two important problems in robotics, robust control and parameter estimation, by casting them as bilevel optimization problem. In future work, we intend to improve the performance of our solver by leveraging more advanced optimization techniques such as warm starting and advanced parallelization schemes like GPU programming. We also plan to apply our solver to other problems in robotics that might exhibit a bilevel structure, such as imitation learning and trajectory optimization through contact.

\section*{Acknowledgments}
The authors of this work were partially supported by the Office of Naval Research, Sea-Based Aviation Program.

%% Use plainnat to work nicely with natbib. 

\bibliographystyle{plainnat}
\bibliography{biblio,ASL_papers}

\end{document}